\documentclass{article}

\usepackage{microtype}
\usepackage{graphicx}
\usepackage{subcaption}
\usepackage{booktabs} 
\usepackage{tcolorbox} 

\usepackage{hyperref}

\usepackage[accepted]{icml2026}

\usepackage{amsmath}
\usepackage{amssymb}
\usepackage{mathtools}
\usepackage{amsthm}
\usepackage{ulem}

\usepackage{multirow}

\usepackage[capitalize,noabbrev]{cleveref}

\theoremstyle{plain}
\newtheorem{theorem}{Theorem}[section]

\newtheorem{corollary}[theorem]{Corollary}
\theoremstyle{definition}
\newtheorem{definition}[theorem]{Definition}
\newtheorem{assumption}[theorem]{Assumption}
\theoremstyle{remark}

\usepackage[textsize=tiny]{todonotes}

\icmltitlerunning{Real-Time Aligned Reward Model beyond Semantics}

\begin{document}
\twocolumn[
  \icmltitle{Real-Time Aligned Reward Model beyond Semantics}



  \icmlsetsymbol{adv}{*}
  \icmlsetsymbol{boss}{\dag}
  
  \begin{icmlauthorlist}
    \icmlauthor{Zixuan Huang}{buaa,seed}
    \icmlauthor{Xin Xia}{seed}
    \icmlauthor{Yuxi Ren}{seed}
    \icmlauthor{Jianbin Zheng}{seed}
    \icmlauthor{Xuefeng Xiao}{seed}
    \icmlauthor{Hongyan Xie}{buaa}
    \icmlauthor{Li Huaqiu}{ts}
    \icmlauthor{Songshi Liang}{ruc}
    \icmlauthor{Zhongxiang Dai}{gzs}
    \icmlauthor{Fuzhen Zhuang}{buaa}
    \icmlauthor{Jianxin Li}{buaa}
    \icmlauthor{Yikun Ban}{buaa,adv}
    \icmlauthor{Deqing Wang}{buaa,boss}
  \end{icmlauthorlist}

  \icmlaffiliation{seed}{Bytedance China}
  \icmlaffiliation{buaa}{Beihang University}
  \icmlaffiliation{ts}{Tsinghua University}
  \icmlaffiliation{ruc}{Renmin University of China}
\icmlaffiliation{gzs}{The Chinese University of Hong Kong, Shenzhen}

\icmlcorrespondingauthor{Deqing Wang}{dqwang@buaa.edu.cn}

  \icmlkeywords{Machine Learning, ICML}

  \vskip 0.3in
]



\printAffiliationsAndNotice{
\textsuperscript{*}Co-advised this work.
}

\begin{abstract}

Reinforcement Learning from Human Feedback (RLHF) is a pivotal technique for aligning large language models (LLMs) with human preferences, yet it is susceptible to reward overoptimization, in which policy models overfit to the reward model, exploit spurious reward patterns instead of faithfully capturing human intent.
Prior mitigations primarily rely on surface semantic information and fails to efficiently address the misalignment between the reward model (RM) and the policy model caused by continuous policy distribution shifts. 
This inevitably leads to an increasing reward discrepancy, exacerbating reward overoptimization.
To address these limitations, we introduce \textbf{R2M} (\textbf{R}eal-Time Aligned  \textbf{R}eward \textbf{M}odel), a novel lightweight RLHF framework.
R2M  goes beyond vanilla reward models that solely depend on the semantic representations of a pretrained LLM. Instead, it leverages the evolving hidden states of the policy (namely \textbf{policy feedback}) to align with the real-time distribution shift of the policy during the RL process.
This work points to a promising new direction for improving the performance of  reward models through real-time utilization of feedback from policy models.
\end{abstract}

\section{Introduction}

Reinforcement Learning from Human Feedback (RLHF) has become a cornerstone technique for aligning large language models (LLMs) with human values and preferences ~\citep{vemprala2023chatgpt, huang2025adaptive, shen2024policy, shen2025exploring, hu2024openrlhf}. 
However, RLHF faces a persistent challenge: reward overoptimization. 
Instead of faithfully capturing human intent, policy models often exploit spurious reward patterns, such as response length, markdown formatting, or superficial linguistic cues like certain n-grams or emojis, to maximize rewards without genuinely improving alignment 
~\citep{gao2023scaling,coste2023reward,eisenstein2023helping}. 
The core issue lies in the reward model: trained on limited preference data, it can only approximate human values. As the policy evolves during RLHF training while the reward model remains fixed, distribution shift exacerbates approximation errors ~\citep{wang2024secrets}, ultimately leading to unreliable reward signals in optimization.

A common mitigation is to iteratively update the reward model so that it adapts to the policy’s evolving behavior. Yet, direct retraining of the reward model at each iteration is computationally prohibitive. 
To address this, one research direction emphasizes uncertainty-aware corrections. \citet{coste2023reward, eisenstein2023helping, zhai2023uncertainty} penalize uncertain samples during policy training, while \citet{zhang2024overcoming} introduce kernel-based uncertainty estimates derived from reward model embeddings. 
Another line of work focuses on robust reward model retraining.
\citet{lang2024fine} incorporate an unsupervised mutual information loss to counter distribution shift, and \citet{liu2024rrm} augment training data by decomposing preferences relative to prompts. These methods trade off efficiency and robustness, but leave open a critical question: \textit{Can we design a new RLHF framework that preserves training efficiency while effectively achieving real-time alignment of reward models towards policy models?}


We propose \textbf{R2M} (\textbf{R}eal-Time Aligned  \textbf{R}eward \textbf{M}odel) to solve this challenge.  
It is a lightweight RLHF framework in which the RM itself is reinforced iteratively by dynamically adapting to the policy’s internal states, and it does not require any additional labeled data or environmental feedback to improve the performance.

Specifically, we observe that deep-layer hidden states of the policy encode latent patterns that are closely correlated with both golden human preferences and the scalar reward scores assigned by RMs. 
This observation aligns with the perspectives on implicit reward modeling advanced in works such as DPO \cite{rafailov2023direct} and PRIME \cite{cui2025process}, 
yet it is often overlooked by existing explicit reward models \cite{liu2024rrm,zhang2024overcoming}.

Building on this insight, we aim to go beyond reward models that solely depend on semantic representations of a pretrained LLM. Instead, we enhance the reward model by incorporating the evolving hidden states of the policy model (namely \textbf{policy feedback}). To this end, we redesign the scoring head of the RM so that it dynamically integrates these hidden states, enabling the RM to adapt to distribution shifts in the policy. 
In our RLHF framework, this introduce a lightweight training component that learns to aggregate policy feedback directly, enhancing the RM's representation without retraining the entire model.
Owing to its efficiency, this mechanism can be seamlessly applied at every training round, ensuring continuous synchronization between the reward model and the policy model.

The design of R2M offers two benefits: 
\textbf{1) Iterative distribution alignment with accurate reward allocation.} The reward model integrates the policy’s evolving hidden states which provide behaviorally grounded and semantically informed feedback. This mitigates distribution shifts, reduces reward overoptimization, and ensures more accurate reward assignment.
\textbf{2) Extremely lightweight overhead.} R2M only need to learn how to aggregate representations, introducing negligible additional cost.
Experimental results demonstrate that R2M significantly improves performance on dialogue tasks (trained on UltraFeedback \citep{cui2023UltraFeedback}, evaluated on Alpaca-Eval \citep{dubois2024length}) and text summarization tasks (trained and evaluated on TL;DR summarization dataset). 
Specifically, compared with vanilla RLOO,  RLOO+R2M increases the AlpacaEval 2 win rate (WR) by 5.2\% - 8.0\%, the length-controlled win rate (LC) by 2.9\% - 6.1\% and the TL;DR win rate by 6.3\%  compared to baselines, while introducing only minimal computational cost.
Furthermore, we conducted a comprehensive analysis, showing that R2M effectively strengthens the vanilla RM and mitigates reward overoptimization with minimal additional training overhead.





\section{Preliminary}
\label{sec:preliminary}
RLHF consists of three main steps: \textbf{1)} Supervised Fine Tuning,  \textbf{2)} Reward Modeling, and \textbf{3)} RL optimization. We provide a detailed workflow shown in Appendix \ref{app:rlhf_workflow}. As R2M is designed to directly integrated into the RL optimization phase, let us consider the following typical third-stage RL Optimization process: 

\textbf{Trajectory Sampling}: At each training step  $t \in [T]$, we update offline policy $\pi_{old}$  to online policy $\pi_\theta$. Then, given a query set $X_t= \{x_1, x_2, \dots, x_n\}$, $\pi_{old}$ is used to sample a group of $K$ responses $G_i = \{y_{i,j}\}_{j=1}^K$ for each $x_i \in X_t$. 


\textbf{Reward Annotation}: 
For each $(x_i, G_i), i \in [n]$, there are $K$ query-response pairs $(x_i, y_{i,j}), j \in [K]$. We use a scalar reward model $r_{\varphi}(x, y)$ to assign scores to each query-response pair,
obtaining $\{r_{i,j} | i \in [n], j \in [K]\}$, resulting in a batch $\mathcal{B} = \{(x_i, y_{i,j}, r_{i,j }) | i \in [n], j \in [K] \}$. After this process, we employ the RLOO approach \citep{ahmadian2024back} to perform advantage estimation within each $G_i$:
\begin{equation} \label{eq:rloo}
         \hat{A}_{i,j} = r_{i,j}- \frac{1}{K-1} \sum_{\hat{j} \neq j} r_{i,\hat{j}}.
\end{equation}

\textbf{Policy Optimization}: For each query-response pair $(x_i, y_{i,j})$, we perform a forward pass in the policy model $\pi_\theta$ and optimize $\pi_\theta$ using importance sampling by maximizing the following objective \citep{shao2024deepseekmath, ahmadian2024back}, where $\varepsilon$ and $\beta$ are hyperparameters:
\begin{equation} \label{eq:object}
\begin{aligned}
&\min [ 
\frac{\pi_\theta(y_{i,j}|x_i)}{\pi_{\theta_{\text{old}}}(y_{i,j}|x_i)} \hat{A}_{i,j}, 
 \\
&\quad \text{clip} ( 
\frac{\pi_\theta(y_{i,j}|x_i)}{\pi_{\theta_{\text{old}}}(y_{i,j}|x_i)}, 
1 - \varepsilon, 
1 + \varepsilon 
) \hat{A}_{i,j} 
\left.\right] \\
&- \beta \mathbb{D}_{KL} [ \pi_\theta \| \pi_{\text{ref}}] 
\end{aligned}
\end{equation}



The design of R2M is based on the aforementioned RL optimization process. As a lightweight and significantly effective alternative, R2M can be seamlessly deployed to all REINFORCE-based RLHF frameworks. Due to resource constraints, we adopt RLOO  as  one of the primary baselines.


\section{Motivation} \label{sec:motivation}
We argue that  deep-layer hidden states of
the policy in a transformer's forward pass contain crucial information that are closely correlated with both golden human preferences and reward scores, making them effective for enhancing vanilla RMs. Due to space constraints, the experimental details of this section are provided in the Appendix \ref{app:motivation_detail}.

Figure~\ref{fig:h_sim} establishes the relationship between hidden state similarity and preference labels. 
The average hidden state similarity between pairs with different preference labels is significantly lower than that between pairs with the same preference label, and this gap widens progressively with increasing layer depth. 
This indicates that \textbf{deep-layer hidden states effectively capture human preferences}. 
Similar viewpoints have also been expressed in works on implicit RMs, such as DPO~\citep{rafailov2023direct} and PRIME~\citep{cui2025process}, 
yet this information beyond semantics is often overlooked by existing explicit RMs.


Figure~\ref{fig:r_h_consistency} establishes the relationship between \textit{deep-layer} hidden state similarity and the absolute difference of reward scores, they exhibit a strong negative correlation: \textbf{higher hidden state similarity corresponds to smaller reward differences,} which is consistent with the observation in DPO~\citep{rafailov2023direct}: during the preference optimization process, the language model implicitly assumes the role of the reward model.
This significant correlation further suggests the potential for  effective alignment between the hidden states of the policy model and the reward model.

\begin{figure}[ht]
\begin{center}
\centerline{\includegraphics[width=1.0\columnwidth]{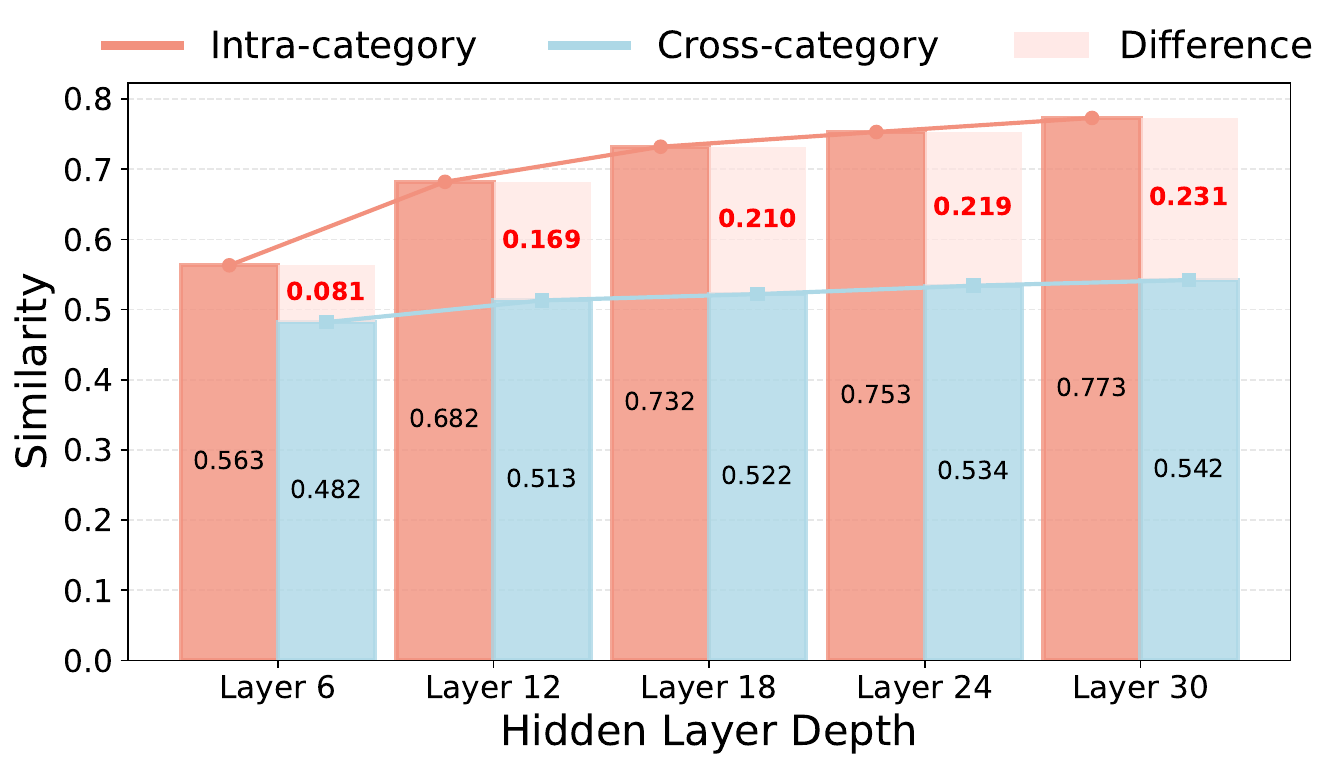}}
\caption{The average hidden state similarity of the same-preference pair set and the different-preference pair set across  transformer layers. Each pair consists of two query-response samples with respective preference labels.} 
\label{fig:h_sim}
\end{center}
\vskip -0.3in
\end{figure}

\begin{figure}[ht]
\begin{center}
\centerline{\includegraphics[width=1.0\columnwidth]{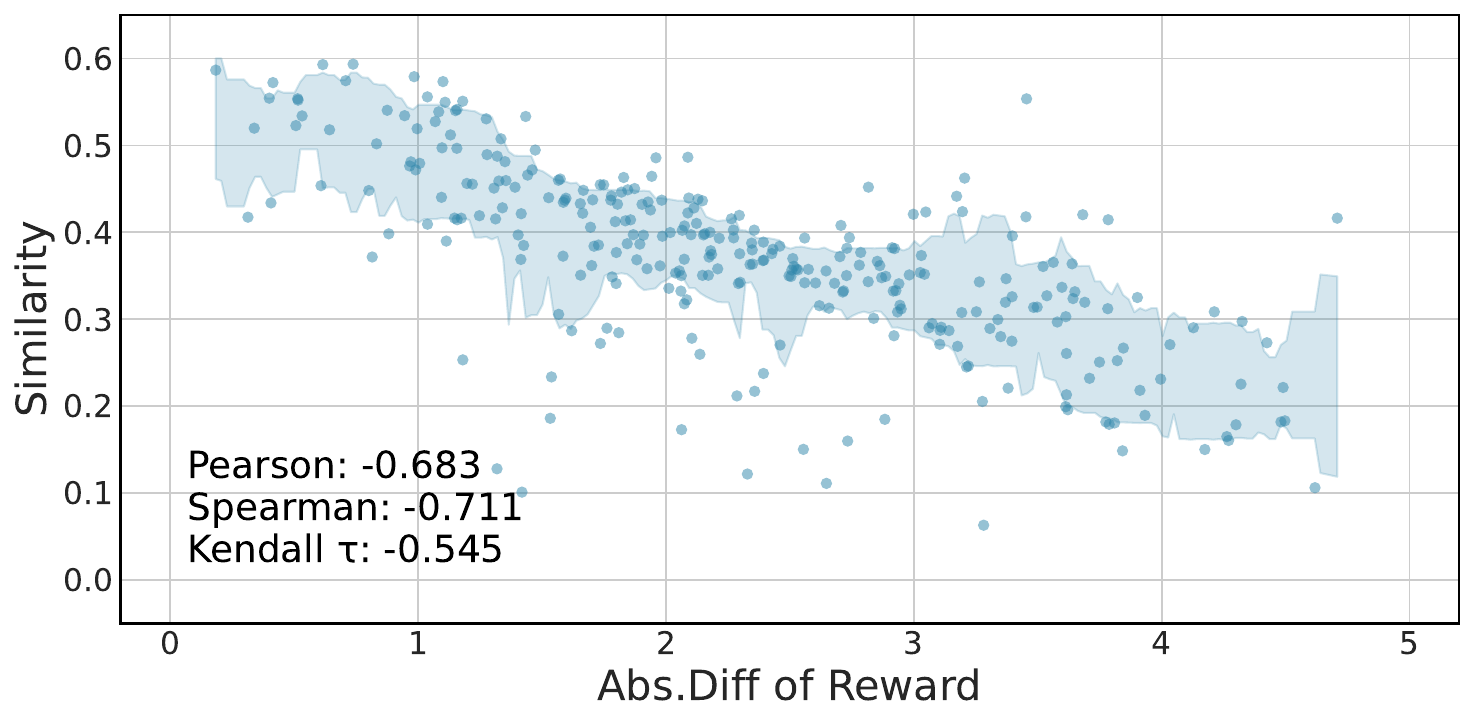}}
\caption{Negative correlation between absolute difference of reward scores allocated by the RM and hidden state similarity. Each data point corresponds to a query-response pair labeled with either identical or differing human preferences.}

\label{fig:r_h_consistency}
\end{center}
\vskip -0.2in
\end{figure}

These findings strongly confirm that \textbf{a policy's hidden states offer valuable insights for alignment of RMs towards policy models.}
Theorem \ref{theo:reward} further shows that, when $\gamma^{(t)}>0$, since ${(1 - \gamma^{(t)})}^{1/2} < 1$, R2M yields a tighter upper bound of $\epsilon$ compared with the vanilla RM.

\begin{tcolorbox}[
    colback=gray!5,
    colframe=black,
    boxrule=0.8pt,
    arc=2pt,
    left=3pt,
    right=3pt,
    top=3pt,
    bottom=3pt
]

\begin{theorem} \label{theo:reward}
(Proof in Appendix \ref{app:distribution_shift_bound})
Suppose that $\epsilon$ quantifies the extent of reward misalignment, we have the following upper bound of $\epsilon$ for R2M and vanilla RM:
\[
\epsilon_{\text{R2M}}^{(t)} \leq {(1 - \gamma^{(t)})}^{1/2} \cdot C + \Delta\mathcal{D}^{(t)} \cdot L
\]
\[
\epsilon_{\text{vanilla}}^{(t)} \leq C + \Delta\mathcal{D}^{(t)} \cdot L
\]
where $\gamma^{(t)} \in [0,1]$, and  $C>0$. 
\end{theorem}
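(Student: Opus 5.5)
The plan is to make the informal quantities in Theorem~\ref{theo:reward} precise and then obtain both bounds from a single triangle-inequality decomposition. Let $r^*$ be the golden reward, $d_0$ the distribution on which the reward model was trained, and $d_t$ the state-action distribution of the policy at step $t$. The reward misalignment is defined as
\[
\epsilon^{(t)} = \mathbb{E}_{d_t}\bigl[(r - r^*)^2\bigr]^{1/2}
\]
for a reward model $r$. The shift is $\Delta\mathcal{D}^{(t)}$, a distance between $d_t$ and $d_0$ (total variation or Wasserstein). We assume that the squared error $(r-r^*)^2$, as a function of the input, is $L$-Lipschitz (or bounded by $L$) with respect to that distance.

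\textbf{Step 1 (decomposition).} Write
\[
\epsilon^{(t)} \le \mathbb{E}_{d_0}\bigl[(r-r^*)^2\bigr]^{1/2} + \Bigl|\mathbb{E}_{d_t}\bigl[(r-r^*)^2\bigr]^{1/2} - \mathbb{E}_{d_0}\bigl[(r-r^*)^2\bigr]^{1/2}\Bigr|.
\]
The second term is controlled by the change of measure, using Kantorovich--Rubinstein duality or the total-variation bound. After absorbing the square root into the constant, this gives the term $\Delta\mathcal{D}^{(t)} L$. For the vanilla RM, bounding the in-distribution error by $C := \sup \mathbb{E}_{d_0}[(r_\varphi - r^*)^2]^{1/2}$ immediately yields the second inequality.

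\textbf{Step 2 (R2M in-distribution error).} R2M's scoring head combines the semantic representation with the policy hidden state $h_t$. Since it is trained by least squares over a function class that contains the vanilla head as a special case (zero weight on $h_t$), its residual can be no worse than the vanilla one. I would quantify the improvement as an $L^2$ projection: the residual $r_\varphi - r^*$ is further projected off the span of features of $h_t$. Define
\[
\gamma^{(t)} = \frac{\|\Pi_{h_t}(r_\varphi - r^*)\|^2}{\|r_\varphi - r^*\|^2} \in [0,1],
\]
the fraction of residual variance explained by the policy feedback. This is exactly the correlation observed in Section~\ref{sec:motivation}. By Pythagoras, the new residual has squared norm $(1-\gamma^{(t)})\|r_\varphi - r^*\|^2$. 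Taking square roots gives the $(1-\gamma^{(t)})^{1/2} C$ term, and inserting it into Step~1 gives the first inequality.

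\textbf{Main obstacle.} The hardest step is justifying that the shift term carries the same constant $L$ for both models. The R2M reward depends on $h_t$, which itself moves with $d_t$, so the Lipschitz constant could differ between the two models. I would handle this by assuming a uniform Lipschitz constant $L$ over the whole head class, with the hidden-state encoder fixed at the time of evaluation. Alternatively, $L$ can be defined as the maximum of the two models' constants, which leaves both bounds valid. A secondary subtlety is that the projection argument requires the R2M head to be the exact least-squares fit; an optimization error term can be folded into $C$.
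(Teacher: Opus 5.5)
Your route differs from the paper's, and one step fails as written: turning the change-of-measure bound into a term linear in $\Delta\mathcal{D}^{(t)}$ for the R2M model. Write $g=r-r^*$. Your hypothesis on $g^2$ lets Kantorovich--Rubinstein or the total-variation bound control the mean squared error linearly, $\mathbb{E}_{d_t}[g^2]\le\mathbb{E}_{d_0}[g^2]+L\,\Delta\mathcal{D}^{(t)}$. But $\epsilon^{(t)}$ is the square root of the left-hand side.

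For the vanilla model this is harmless: since $C>0$, $\sqrt{C^2+x}\le C+x/(2C)$, so the second inequality holds with $L/(2C)$ in place of $L$.

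For R2M, however, you feed the in-distribution value $(1-\gamma^{(t)})^{1/2}C$ into the same step. The linearization constant then becomes $L/\bigl(2(1-\gamma^{(t)})^{1/2}C\bigr)$, which depends on $\gamma^{(t)}$ and blows up as $\gamma^{(t)}\to1$. At $\gamma^{(t)}=1$, the case of the paper's first corollary, no linear bound follows at all. On the real line take $d_0=\delta_0$, $d_t=\delta_\Delta$ and an R2M residual with $g^2(u)=L|u|$. The residual vanishes on $d_0$, yet $\epsilon_{\text{R2M}}^{(t)}=(L\Delta)^{1/2}$, which is not $O(\Delta)$. The total-variation version fails the same way when a $\Delta$-fraction of mass moves to a point where $g^2=L$. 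Your remedy for the ``main obstacle'' (a uniform Lipschitz constant over the head class, or the maximum of two constants) does not address this. The obstruction is the square root, not the dependence of the head on $h_t$.

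The cleanest repair is to carry out the projection of Step~2 in $L^2(d_t)$ rather than $L^2(d_0)$. This is also the more faithful idealization, since R2M's head is refit online on rollouts of the current policy. Pythagoras then gives the identity $\epsilon_{\text{R2M}}^{(t)}=(1-\gamma^{(t)})^{1/2}\,\epsilon_{\text{vanilla}}^{(t)}$. The first inequality follows from the second because $(1-\gamma^{(t)})^{1/2}\,\Delta\mathcal{D}^{(t)}L\le\Delta\mathcal{D}^{(t)}L$. You never need a shift bound for the $h_t$-dependent head, so the obstacle you flagged disappears.

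Alternatively, keep the $L^2(d_0)$ route but assume the residual $g$ itself (not $g^2$) is uniformly $L$-Lipschitz, and take $\Delta\mathcal{D}^{(t)}=W_2(d_t,d_0)$. Minkowski's inequality under an optimal coupling gives $\|g\|_{L^2(d_t)}\le\|g\|_{L^2(d_0)}+L\,W_2(d_t,d_0)$, with no square root.

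In either version, Step~2 describes an idealized least-squares fit of the residual, not the BT+GRE-trained head (which never sees $r^*$). So $\gamma^{(t)}$ is a property of that idealization.

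For comparison, the paper takes $\epsilon$ to be an expected absolute error and $\Delta\mathcal{D}^{(t)}$ to be total variation, so there is no square root to remove. It routes both models through a single pivot: the R2M head evaluated at an ideal fused representation, $r_\phi(x,y,h_f^*)$. The shift term $\mathbb{E}_{\mathcal{D}^{(t)}}|r_\phi(x,y,h_f^*)-r^*(x,y)|\le\Delta\mathcal{D}^{(t)}L$ is then literally shared by the two bounds, which is how the paper avoids your obstacle.

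In the paper, the factor $(1-\gamma^{(t)})^{1/2}$ comes from $L_h$-Lipschitzness in the fused state, the norm bound $D$ and the law of cosines. There $\gamma^{(t)}$ is the expected cosine between actual and ideal fused states, and $C=L_hD\sqrt2$. The vanilla inequality is assumed outright: the vanilla output is posited to lie within $C$ of the pivot.

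Your version buys three things:
\begin{itemize}
\item a derived rather than assumed vanilla bound;
\item an interpretable $\gamma^{(t)}$, namely the share of the vanilla residual's energy explained by policy-feedback features;
\item an exact Pythagorean identity where the paper has only an inequality. As written, that inequality also needs $\|h_f\|_2=\|h_f^*\|_2$, plus Jensen's inequality to pass from pointwise cosines to their mean.
\end{itemize}
The paper's version buys independence from how R2M is trained, at the price of an unobservable ideal representation $h_f^*$.
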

\end{tcolorbox}


\section{Method}

Figure \ref{fig:workflow} illustrates the overall workflow of R2M. Built upon vanilla RL optimization frameworks, R2M primarily addresses the following challenges: \textbf{1)} how to structurally incorporate feedback messages from the policy model into the reward model (Section \ref{sec:rm_feedback}); \textbf{2)} how to design the optimization objectives for the reward model (Section \ref{sec:rm_train}). 


\begin{figure*}[!t]
\begin{center}
 \includegraphics[width=0.85\textwidth]{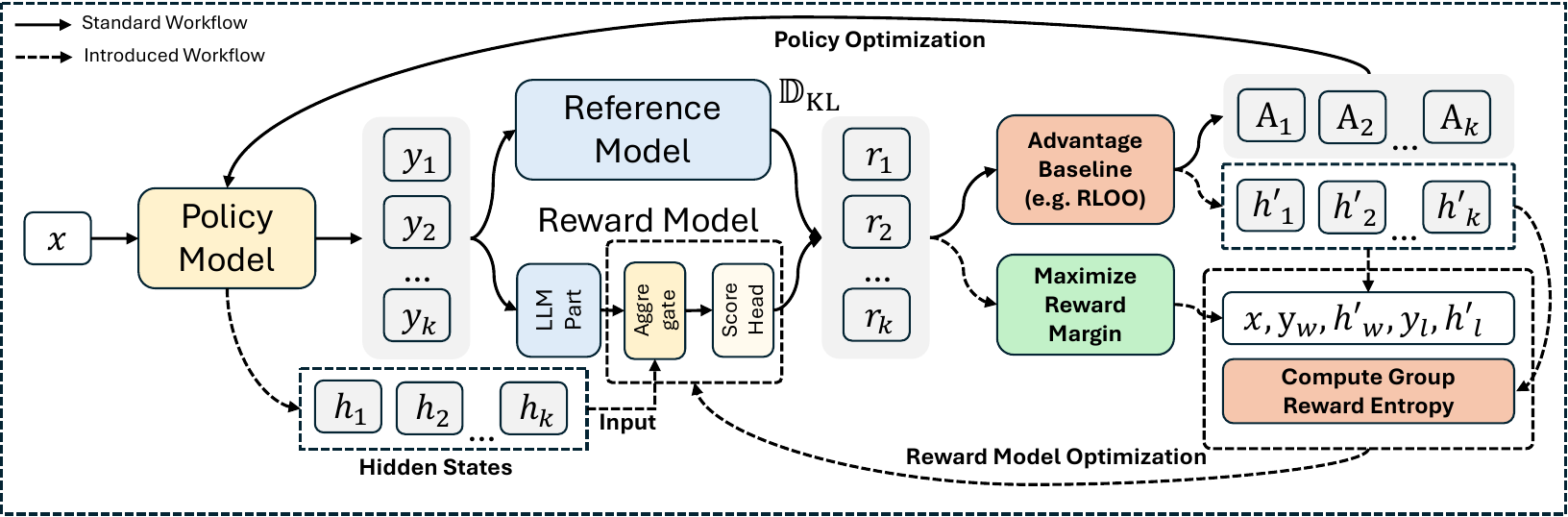} 
\end{center}
\caption{Overview of R2M. We first aggregate  the last-layer hidden states $h_i$  from the policy with  the LLM part output of the reward model. This aggregated representation is then fed into the scoring head for reward prediction. When the policy updates, we get the real-time feedback $h'_i$  and utilize it to construct preference pairs. Finally, we  optimize the reward model by jointly  minimizing the Bradley-Terry loss and the Group Reward Entropy loss.}
\label{fig:workflow}
\vskip -0.2in
\end{figure*}

\subsection{Reward Model Structure } \label{sec:rm_feedback}


In this section, we focus on integrating the policy feedback into the reward model. As shown in Figure \ref{fig:rm_structure}, we introduce a policy feedback data flow that bypasses the LLM part to directly enhance the original Reward Token Embedding (introduced in Appendix \ref{app:rlhf_workflow}). We formally redefine the reward model $r_\varphi(x,y)$ with policy feedback $h$ as  $r_\varphi(x,y,h)$. To effectively utilize the policy feedback, R2M contains two pivotal extra components: \textbf{Sequence-to-Token Cross Attention} and  \textbf{Time-Step-Based Weighted Combination}.

Specifically, during  \textit{Trajectory Sampling}, we collect the last-layer hidden states $h_{i,j} \in \mathbb{R}^{S_{i,j}-1 \times D_p}$ for each query-response pair $(x_i, y_{i,j}), i \in [n], j \in [K]$ from the policy. Here, $S_{i,j}$ denotes the length of the query-response pair, and $D_p$ represents the hidden size of the policy. 
Then, during \textit{Reward Annotation}, each $(x_i, y_{i,j})$ is fed into the reward model's LLM component to derive the Reward Token Embedding (RTE)  $H_{\text{last}}^{i,j} \in \mathbb{R}^{1 \times D_{\text{rm}}}$ (denoted in Appendix \ref{app:rlhf_workflow}).

\textbf{Sequence-to-Token Cross Attention.} 
We introduce a cross-attention component to \textit{extract relevant information from hidden states} of query-response pairs, while \textit{bridging the semantic gap} between heterogeneous policy models and reward models (discussed in Appendix \ref{app:distribution_shift_bound}).
Specifically, we inject policy feedback by performing a cross-attention operation from the sequence to a single token. This enables the query of the RTE $q=H_{\text{last}}^{i,j}W_q$ to fully absorb the keys $k=h_{i,j}W_k$ and values $k=h_{i,j}W_v$ of the hidden state sequence $h_{i,j}$, which contains both policy state information and sequence semantic information, and updates it into a more information-rich Aggregated RTE:
\begin{equation}
\widehat{H}_{\text{last}}^{i,j} = \text{Softmax}\left(\frac{q k^T}{\sqrt{d}}\right) v W_o \in \mathbb{R}^{1 \times D_{\text{rm}}},
\label{eq:aggregated_reward_token_residual}
\end{equation}
where $W_q \in \mathbb{R}^{D_{\text{rm}} \times d}$, $W_k, W_v \in \mathbb{R}^{D_{\text{p}} \times d}$, and $W_o \in \mathbb{R}^{d \times D_{\text{rm}}}$ are  learnable weight matrices of the cross-attention module, with $d$ representing the internal width.



\textbf{Time-Step-Based Weighted Combination.}
After obtaining $\widehat{H}_{\text{last}}^{i,j}$, we adopt an exploration-exploitation approach \citep{ban2021ee, ban2024neural, huang2025adaptive} to balance the weights of $H_{\text{last}}^{i,j}$ and $\widehat{H}_{\text{last}}^{i,j}$, yielding the final RTE $H_{\text{fin}}^{i,j}$. Specifically, we use a time-step-based approach to gradually decrease the weight on the original RTE $H_{\text{last}}^{i,j}$ as follows:
\begin{equation} \label{eq:aggregate_RTE}
    \begin{split} 
         H_{\text{fin}}^{i,j} = (1 - \omega(t)) \widehat{H}_{\text{last}}^{i,j} + \omega(t)  H_{\text{last}}^{i,j},  \\
    \omega(t) = \text{max}( \frac{1}{2} \text{cos}( \frac{t}{T}  \pi)+\frac{1}{2}, \Omega ), 
    \end{split}
\end{equation}

where $t$ is the current training round, $T$ is the total number of training rounds, $\Omega$ is the minimum weight of $H_{\text{last}}^{i,j}$, and $\omega(t)$ is a monotonically decreasing function of $t$  \citep{wu2025arm}.
When $t$ is small, we prioritize leveraging the existing RTE $H_{\text{last}}^{i,j}$. As R2M iteratively updates during the training process (as discussed in Section~\ref{sec:rm_train}), we gradually increase the influence of $\hat{H}_{\text{last}}^{i,j}$ to enable R2M to progressively identify and adapt to the distribution shift of the policy. 
As a result of balancing the exploitation of the original embedding with the exploration of policy feedback information, $H_{\text{fin}}^{i,j}$ is then mapped by the reward head $\phi$ to the final scalar reward $r_{\varphi}(x_i, y_{i,j}, h_{i,j}) = \phi(H_{\text{fin}}^{i,j}) \in \mathbb{R}$.

\begin{figure}[ht]
\vskip -0.1in
\begin{center}
\centerline{\includegraphics[width=0.7\columnwidth]{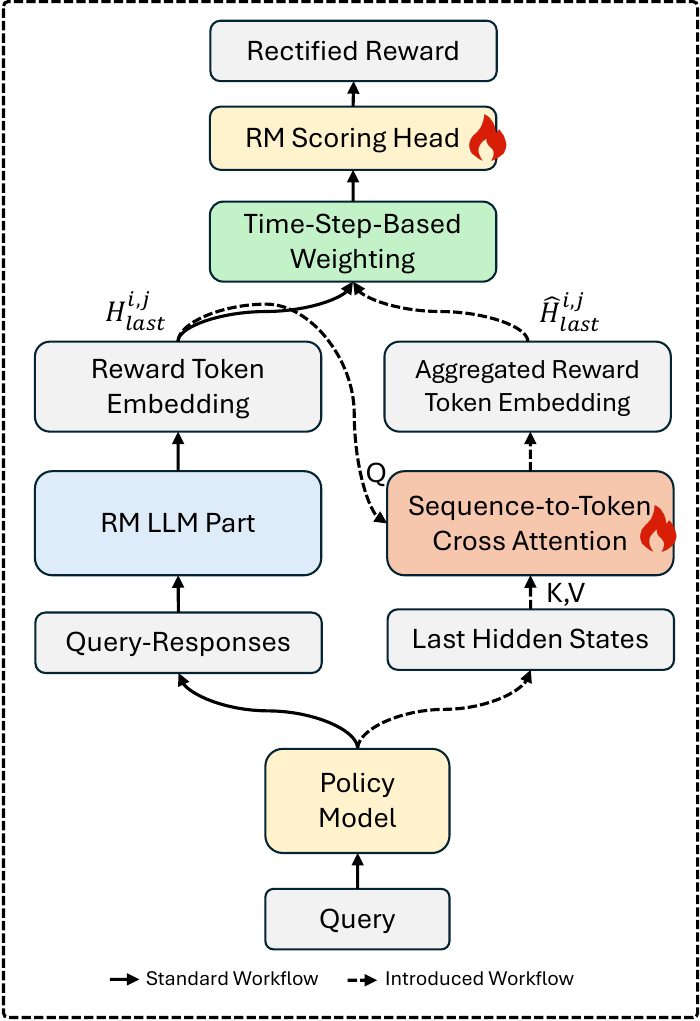}}
\caption{The structure of R2M. Building on the dataflow based on solely surface semantic information (left), R2M introduces an additional dataflow based on the policy feedback (right).} 
\label{fig:rm_structure}
\end{center}
\vskip -0.2in
\end{figure}

\subsection{Iterative Reward Model Lightweight Optimization} \label{sec:rm_train}


In Section~\ref{sec:rm_feedback}, we have introduced  policy feedback into the RM. However, the semantic spaces are not yet aligned, making it challenging for the reward model to directly utilize this information. 
To address this,  we incorporate an extra lightweight Reward Model Optimization phase following the Policy Optimization phase at each training step, and propose a novel optimization objective for R2M, namely the Group Reward Entropy Bradley-Terry (GREBT) loss.



\textbf{Hidden State Update}.
To ensure that the hidden states $h_{i,j}$ remain up-to-date and accurately reflect the internal states of the policy $\pi_{\theta}$, we update $h_{i,j}$ whenever $(x_{i}, y_{i,j})$ is used to update $\pi_{\theta}$. Specifically, during the forward pass of $\pi_{\theta}$ on $(x_{i}, y_{i,j})$, we fetch the latest hidden states $h_{i,j}$, which incurs no additional computational overhead. Since the policy model is trained for $k$ epochs on the same batch at each training step $t$ \citep{shao2024deepseekmath, hu2025reinforce++}, this update is performed only in the final epoch. For notational simplicity, we continue to use $h_{i,j}$ to denote the most recent hidden states. This mechanism enables the RM to dynamically capture distribution shifts in real time as the policy evolves.

\textbf{Group Reward Entropy  Bradley-Terry Loss.}
To enhance the robustness of the reward model by incorporating policy feedback during score allocation, we propose the Group Reward Entropy Bradley–Terry (GREBT) Loss.  For each query-response group $(x_i, G_i)$, to ensure the reliability of preference labels, we select only the responses with the highest  and lowest reward scores to construct the preference pair, resulting in $\{x_i, y_{i,w}, h_{i,w}, y_{i,l}, h_{i,l}\}$. 
Here, $w$ and $l$ denote winner and loser, respectively, indicating the better and worse options in a preference pair.
Then, we can establish the Bradley-Terry optimization objective as:
\begin{equation} \label{eq:bt_loss}
\begin{split}
    \mathcal{L}_{\text{BT}}(i;\varphi) = -\log \sigma \big(r_{\varphi}(x_i, y_{i,w}, h_{i,w}) \\ - r_{\varphi}(x_i, y_{i,l}, h_{i,l})\big),
\end{split}
\end{equation} 
which allows the reward model to be continuously optimized as the policy evolves.

However, in practice, the RM often assigns nearly identical scores to responses within a group, especially in the later phases of RL optimization when the responses become more homogeneous.
This phenomenon is referred to as the \textit{group degeneration} in RLVR \cite{yu2025dapo}, and we also observed a similar problem during our R2M training process (discussed in Appendix \ref{app:gre_anti_collapse}).
To mitigate the impact of the group degeneration, we introduce a entropy regularization term namely Group Reward Entropy to encourage greater reward diversity within each group.
Specifically, for each group $(x_i, G_i)$, we first compute the foward pass of  the RM $\varphi$ on all samples to get newly allocated reward scores $r_{i,j} = r_{\varphi}(x_i, y_{i,j}, h_{i,j}), j \in [K]$. 
We define the Group Reward Entropy (GRE) loss for group $(x_i,G_i)$ as 
\begin{equation}  \label{eq:le_loss}
\begin{split}
\mathcal{L}_{\text{GRE}}(i;\varphi) = -\sum_{j=1}^K p_{i,j} \log p_{i,j}, \text{where} \\  
\ p_{i,j} = \text{softmax}\left( \frac{r_{i,j} - \text{mean}(\mathbf{r})}{\text{std}(\mathbf{r})}\right), 
\end{split}
\end{equation}
where $\mathbf{r} = \{r_{i,1}, r_{i,2}, \dots, r_{i,K}\}$, and $i$ is the group index, the softmax operation is applied across all standardized reward values within the group to get the relative preference of each sample.
By  minimizing the GRE loss, we minimize the GRE and  sharpen the distribution $p_{i,j}$, thereby amplifying the score disparities within the group.
Finally, the overall optimization objective of R2M is given by:
\begin{equation} \label{eq:fin_loss}
\mathcal{L}_{\text{GREBT}}(i ; \varphi) = (1-\alpha)\mathcal{L}_{\text{BT}}(i;\varphi) + \alpha \mathcal{L}_{\text{GRE}}(i;\varphi), 
\end{equation} 
As shown in Theorem \ref{theo:gre}, by guiding the update of $\varphi$, $\mathcal{L}_\text{GRE}(i;\varphi;\alpha)$ reduces $ \forall C_i$ to a greater degree as the weight coefficient $\alpha \in[0,1]$ increases.
\begin{tcolorbox}[
    colback=gray!5,
    colframe=black,
    boxrule=0.8pt,
    arc=2pt,
    left=3pt,
    right=3pt,
    top=3pt,
    bottom=3pt
]

\begin{theorem} \label{theo:gre}

(Proof in Appendix \ref{app:gre_anti_collapse})
Given $\varphi_\alpha = \arg\min_{\varphi} \mathcal{L}_{\text{GREBT}}(i;\varphi;\alpha)$ and the group degeneration degree $C_i(\varphi)$ for any group $G_i$, we establish the following results:
(1) $C_i(\varphi_\alpha) < C_i(\varphi_0)$;
(2) $\Delta C_i(\alpha) := C_i(\varphi_0) - C_i(\varphi_\alpha)$, $\forall  \alpha_1 < \alpha_2$, $\Delta C_i(\alpha_1) < \Delta C_i(\alpha_2)$.
\end{theorem}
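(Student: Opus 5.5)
Since $C_i$ is not defined in the statement, I will first fix a concrete model. I take $C_i(\varphi)$ to be the group reward entropy itself, i.e. $C_i(\varphi) := \mathcal{L}_{\text{GRE}}(i;\varphi)$. Near-uniform softmax over the standardized rewards means degenerate, homogeneous scores. The objective is
\[
F_\alpha(\varphi) = (1-\alpha) B(\varphi) + \alpha C_i(\varphi), \qquad B := \mathcal{L}_{\text{BT}}(i;\cdot).
\]
Then $\varphi_0$ minimizes $B$ alone, and $\varphi_\alpha$ minimizes $F_\alpha$. The theorem becomes a statement about how the $C$-component of a minimizer of a convex-combination (scalarized) objective moves along the weight $\alpha$.

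\textbf{Step 1 (weak monotonicity via exchange).} For $\alpha_1<\alpha_2$, compare optimality of $\varphi_{\alpha_1}$ and $\varphi_{\alpha_2}$:
\[
F_{\alpha_1}(\varphi_{\alpha_1}) \le F_{\alpha_1}(\varphi_{\alpha_2}), \qquad F_{\alpha_2}(\varphi_{\alpha_2}) \le F_{\alpha_2}(\varphi_{\alpha_1}).
\]
Add the two inequalities and cancel the common terms. Writing $F_\alpha = B + \alpha(C_i - B)$, this gives
\[
(\alpha_2-\alpha_1)\big[(C_i-B)(\varphi_{\alpha_2}) - (C_i-B)(\varphi_{\alpha_1})\big] \le 0.
\]
Combining this with either single inequality yields $B(\varphi_{\alpha_2}) \ge B(\varphi_{\alpha_1})$ and $C_i(\varphi_{\alpha_2}) \le C_i(\varphi_{\alpha_1})$. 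This is the standard monotonicity of scalarized Pareto solutions. Taking $\alpha_1=0$ gives the weak form of (1), and the general case gives the weak form of (2), since $\Delta C_i(\alpha)=C_i(\varphi_0)-C_i(\varphi_\alpha)$.

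\textbf{Step 2 (strictness).} This is the main obstacle. I would add a mild regularity assumption: minimizers are unique, $B$ and $C_i$ are differentiable, and at any minimizer $\nabla B$ and $\nabla C_i$ are not parallel with opposite signs. In words, the BT optimum does not already minimize the entropy, which holds under group degeneration.

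Under this assumption, suppose equality $C_i(\varphi_{\alpha_2}) = C_i(\varphi_{\alpha_1})$ held. The exchange inequalities then force $B$ to be equal as well, so $\varphi_{\alpha_1}$ is also optimal for $F_{\alpha_2}$. Uniqueness gives $\varphi_{\alpha_1}=\varphi_{\alpha_2}$. The first-order conditions $(1-\alpha)\nabla B + \alpha\nabla C_i = 0$ at two distinct values of $\alpha$ then force $\nabla B = \nabla C_i = 0$ at that point. For $\alpha_1=0$ this means $\nabla C_i(\varphi_0)=0$, which contradicts the assumption that the BT optimum is not entropy-stationary. Strict inequality therefore follows in both (1) and (2).

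An alternative I would try if the exchange route is awkward is an implicit-function argument. Differentiating the first-order condition in $\alpha$ gives
\[
\frac{d}{d\alpha} C_i(\varphi_\alpha) = -\nabla C_i^\top H_\alpha^{-1} (\nabla C_i - \nabla B)\big/ \ldots,
\]
whose sign can be shown to be negative using positive definiteness of the Hessian $H_\alpha$ together with the first-order condition $\nabla B = -\tfrac{\alpha}{1-\alpha}\nabla C_i$. Strict negativity again needs $\nabla C_i(\varphi_\alpha)\neq 0$, so this route relies on the same non-degeneracy assumption. Integrating the derivative from $\alpha_1$ to $\alpha_2$ gives (2), and setting $\alpha_1=0$ gives (1).
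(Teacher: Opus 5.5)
Your Step 1 is correct. Specialized to $\alpha_1=0$, it is exactly the paper's Part 1. The paper compares $\varphi_\alpha$ with $\varphi_0$ through the same two optimality inequalities. It rules out equality because $\nabla\mathcal{L}_{\text{BT}}(\varphi_0)=0$ while $\nabla C_i(\varphi_0)\neq 0$, so the GREBT loss has a descent direction at $\varphi_0$.

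For (2), the paper does not use an exchange argument. It differentiates the first-order condition via the implicit function theorem and obtains $\frac{d}{d\alpha}C_i(\varphi_\alpha)=-\frac{1}{1-\alpha}\nabla C_i^\top H(\alpha)^{-1}\nabla C_i<0$ from a positive definite Hessian. That is your ``alternative.'' Your formula there needs no extra division: it is $-\nabla C_i^\top H_\alpha^{-1}(\nabla C_i-\nabla B)$, and the first-order condition turns $\nabla C_i-\nabla B$ into $\frac{1}{1-\alpha}\nabla C_i$.

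Your pairwise exchange route is the more robust of the two:
\begin{itemize}
\item It compares global minimizers directly, so it needs no second derivatives and no differentiable minimizer path $\alpha\mapsto\varphi_\alpha$. In a non-convex landscape the global argmin can jump between basins, and the implicit-function computation says nothing there.
\item It stays valid at $\alpha_2=1$, where the paper's factor $1/(1-\alpha)$ breaks down.
\item It proves (1) and (2) with one argument.
\item It does not even need uniqueness. If $C_i(\varphi_{\alpha_1})=C_i(\varphi_{\alpha_2})$, the exchange inequalities give $F_{\alpha_2}(\varphi_{\alpha_1})=F_{\alpha_2}(\varphi_{\alpha_2})$. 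So $\varphi_{\alpha_1}$ is itself a minimizer of $F_{\alpha_2}$, and both first-order conditions hold at that single point.
\end{itemize}
What the paper's route gives in return is a quantitative local rate of decrease.

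What you must repair is the non-degeneracy hypothesis in Step 2. As literally stated (``at any minimizer $\nabla B$ and $\nabla C_i$ are not parallel with opposite signs''), it can never hold. For $\alpha\in(0,1)$ the first-order condition $(1-\alpha)\nabla B+\alpha\nabla C_i=0$ forces exactly that anti-parallelism. What your contradiction actually uses is that no minimizer is a common critical point of $B$ and $C_i$, i.e.\ $\nabla C_i(\varphi_\alpha)\neq 0$ for every $\alpha\in[0,1)$. This is the same condition the paper asserts in its Part 2.

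Your verbal version, that the BT optimum is not entropy-stationary, gives this only at $\alpha=0$. It proves (1), but for (2) with $\alpha_1>0$ your argument ends at $\nabla B(\varphi_{\alpha_1})=\nabla C_i(\varphi_{\alpha_1})=0$, which $\nabla C_i(\varphi_0)\neq 0$ does not exclude. Here is a failure case. Suppose some $\varphi^\ast$ is a well-separated global minimizer of $C_i$ and also a local minimizer of $B$. Then $\varphi_\alpha=\varphi^\ast$ for all $\alpha$ close enough to $1$, so $C_i(\varphi_\alpha)$ is constant there and (2) fails, even though $\nabla C_i(\varphi_0)$ may be nonzero.

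Once the hypothesis is stated as $\nabla C_i(\varphi_\alpha)\neq 0$ for all $\alpha\in[0,1)$, your proof is complete.
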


\end{tcolorbox}

With the GRE loss incorporated into the optimization object, we enable the RM to progressively learn to provide reasonable and more confident reward signals while incorporating real-time policy feedback, thereby allowing it to automatically adapt to the policy's distribution shifts.

\textbf{Workflow}. Algorithm \ref{algorithm:main} illustrates the workflow of our proposed R2M algorithm, 
The modifications primarily involve utilizing both shallow semantic information $(x_i,y_{i,j})$ and policy feedback  $h_{i,j}$ beyond semantics during the Reward Annotation phase, as well as introducing an additional lightweight Reward Model Optimization phase to iteratively update the reward model based on real-time policy feedback.

\textit{Policy Optimization (Lines 10-14)}.
We retain the same Policy Optimization phase as described in Section \ref{sec:preliminary}, with the only difference being that we update the policy feedback for each query-response pair using the real-time updated $\pi_{\theta}$ as  mentioned in Section \ref{sec:rm_train}.

\textit{Reward Model Optimization (Lines 15-20)}.
To preserve the general representational capacity of the reward model's LLM part while enhancing the relatively weaker linear projection component, \textbf{we solely update the cross-attention component and the scoring head $\phi$}, leaving the LLM part frozen.
We discuss detailed motivations in the Appendix \ref{app:lightweight}.
This design significantly reduces the overall computational cost of R2M, ensuring the feasibility of iteratively updating the reward model.

\section{Experiments and Analyses}
In this section, we present the primary experimental results along with their analysis.
We set the learning rate of the reward model to $1 \times 10^{-6}$ for the dialogue task and $5 \times 10^{-7}$ for the summarization task, the weight coefficient of the hybrid loss to $\alpha=0.3$, and the internal width of the cross-attention component to $d=2048$.
During the entire training process, we sample 51.2k trajectories with a maximum length of 512 for the dialogue task, and 1000k trajectories with a maximum length of 50 for the document summarization task.


We integrate R2M into both RLOO and GRPO, and compare them against vanilla RL algorithms. 
Additionally, we introduce the following three baselines for comparison:

\textbf{Pretrained RM}: Built upon the vanilla RL algorithm, we perform full offline pre-training of the vanilla RM using the same queries with golden preference response pairs from UltraFeedback.

\textbf{R2M w/o Train}: This variant incorporates policy feedback only during the reward function scoring phase, while keeping the R2M frozen.

\textbf{Iterative RM$_{\text{Head}}$}: In each training iteration, we directly compute $\mathcal{L}_{\text{GREBT}}$ using the original reward scores retained in Reward Annotation phase and update the RM's scoring head accordingly.


More experimental details are provided in Appendix \ref{app:dialog}, Appendix \ref{app:tldr} and Appendix \ref{app:baseline} due to space constraints.

\begin{table*}
    \centering
    \small 
    \caption{AlpacaEval 2 and MT-Bench Results of R2M compared with baselines on Dialogue Tasks.
    LC and WR denote length-controlled and raw win rate, respectively. Here, \textbf{bold} denotes the best performance, \underline{underline} indicates the second-best performance. Relative changes are compared with the base model (SFT).}
    \label{tab:main_dialogue}
    \begin{tabular}{l|cccc|cccc}
        \toprule
        \multirow{3}{*}{\textbf{Method}} & \multicolumn{4}{c|}{\textbf{Qwen2.5-3B-Instruct}} & \multicolumn{4}{c}{\textbf{LLaMA3-8B-Instruct}} \\
        \cmidrule(lr){2-5} \cmidrule(lr){6-9}
        & \multicolumn{3}{c|}{\textbf{Alpaca-eval}} & \textbf{MT-Bench} & \multicolumn{3}{c|}{\textbf{Alpaca-eval}} & \textbf{MT-Bench} \\
        \cmidrule(lr){2-4} \cmidrule(lr){5-5} \cmidrule(lr){6-8} \cmidrule(lr){9-9}
        & \textbf{LC(\%)} & \textbf{WR(\%)} & \textbf{LEN} & \textbf{GPT-4} & \textbf{LC(\%)} & \textbf{WR(\%)} & \textbf{LEN} & \textbf{GPT-4} \\
        \midrule
        SFT  & 15.5 & 15.8 & 2218 & 6.4 & 22.9 & 22.6 & 1899 & 6.9 \\
        ReMax & 21.8 \scriptsize($\uparrow$ 40.6\%) & 25.1 \scriptsize($\uparrow$ 58.9\%) & 2916 & 6.4 \scriptsize($\uparrow$ 0.0\%) & 28.7 \scriptsize($\uparrow$ 25.3\%) & 30.7 \scriptsize($\uparrow$ 35.8\%) & 2289 & 7.0 \scriptsize($\uparrow$ 1.4\%) \\
        REINFORCE++ & 21.4 \scriptsize($\uparrow$ 38.1\%) & \underline{26.4} \scriptsize($\uparrow$ 67.1\%) & 3252 & 6.3 \scriptsize($\downarrow$ 1.6\%) & 29.3 \scriptsize($\uparrow$ 27.9\%) & 31.8 \scriptsize($\uparrow$ 40.7\%) & 2192 & 6.8 \scriptsize($\downarrow$ 1.4\%) \\
        \midrule
        GRPO & 22.7 \scriptsize($\uparrow$ 46.5\%) & 25.6 \scriptsize($\uparrow$ 62.0\%) & 3012 & 6.3 \scriptsize($\downarrow$ 1.6\%) & 29.5 \scriptsize($\uparrow$ 28.8\%) & 32.6 \scriptsize($\uparrow$ 44.2\%) & 2216 & 7.0 \scriptsize($\uparrow$ 1.4\%) \\
        + R2M w/o Train & 17.4 \scriptsize($\uparrow$ 12.3\%) & 19.4 \scriptsize($\uparrow$ 22.8\%) & 3317 & 6.2 \scriptsize($\downarrow$ 3.1\%) & 25.6 \scriptsize($\uparrow$ 11.8\%) & 27.0 \scriptsize($\uparrow$ 19.5\%) & 2261 & 6.7 \scriptsize($\downarrow$ 2.9\%) \\ 
        + Pretrained RM & 22.9 \scriptsize($\uparrow$ 47.7\%) & 28.2 \scriptsize($\uparrow$ 78.5\%) & 3101 & 6.4 \scriptsize($\uparrow$ 0.0\%) & 31.5 \scriptsize($\uparrow$ 37.5\%) & 34.3 \scriptsize($\uparrow$ 51.8\%) & 2278 & \underline{7.1} \scriptsize($\uparrow$ 2.9\%) \\
        + Iterative RM$_{\text{Head}}$ & 23.5 \scriptsize($\uparrow$ 51.6\%) & 27.8 \scriptsize($\uparrow$ 76.0\%) & 3050 & 6.5 \scriptsize($\uparrow$ 1.6\%) & 32.0 \scriptsize($\uparrow$ 39.7\%) & 33.9 \scriptsize($\uparrow$ 50.0\%) & 2250 & 7.0 \scriptsize($\uparrow$ 1.4\%) \\
        + R2M & \textbf{25.8} \scriptsize($\uparrow$ 66.5\%) & \underline{30.9} \scriptsize($\uparrow$ 95.6\%) & 2871 & \underline{6.6} \scriptsize($\uparrow$ 3.1\%) & \textbf{35.6} \scriptsize($\uparrow$ 55.4\%) & \textbf{39.4} \scriptsize($\uparrow$ 74.3\%) & 2011 & \textbf{7.3} \scriptsize($\uparrow$ 5.8\%) \\
        \midrule
        RLOO & 21.9 \scriptsize($\uparrow$ 41.3\%) & 26.0 \scriptsize($\uparrow$ 64.6\%) & 3174 & 6.4 \scriptsize($\uparrow$ 0.0\%) & 28.4 \scriptsize($\uparrow$ 24.0\%) & 30.2 \scriptsize($\uparrow$ 33.6\%) & 2186 & \underline{7.1} \scriptsize($\uparrow$ 2.9\%) \\
        + R2M w/o Train & 15.8 \scriptsize($\uparrow$ 1.9\%) & 20.5 \scriptsize($\uparrow$ 29.7\%) & 3154 & 6.2 \scriptsize($\downarrow$ 3.1\%) & 24.4 \scriptsize($\uparrow$ 6.5\%) & 27.4 \scriptsize($\uparrow$ 21.2\%) & 2366 & 6.5 \scriptsize($\downarrow$ 5.8\%) \\
        + Pretrained RM & 22.8 \scriptsize($\uparrow$ 47.1\%) & 27.4 \scriptsize($\uparrow$ 73.4\%) & 2992 & 6.5 \scriptsize($\uparrow$ 1.6\%) & 30.5 \scriptsize($\uparrow$ 33.2\%) & 32.2 \scriptsize($\uparrow$ 42.5\%) & 2172 & \underline{7.1} \scriptsize($\uparrow$ 2.9\%) \\
        + Iterative RM$_{\text{Head}}$ & 23.2 \scriptsize($\uparrow$ 50.3\%) & 27.0 \scriptsize($\uparrow$ 70.9\%) & 2950 & 6.5 \scriptsize($\uparrow$ 1.6\%) & 31.0 \scriptsize($\uparrow$ 35.4\%) & 31.8 \scriptsize($\uparrow$ 40.7\%) & 2150 & 7.0 \scriptsize($\uparrow$ 1.4\%) \\
        + R2M & \underline{24.8} \scriptsize($\uparrow$ 60.0\%) & \textbf{31.2} \scriptsize($\uparrow$ 97.5\%) & 2911 & \textbf{6.7} \scriptsize($\uparrow$ 4.7\%) & \underline{34.5} \scriptsize($\uparrow$ 50.7\%) & \underline{38.2} \scriptsize($\uparrow$ 69.0\%) & 2011 & \textbf{7.3} \scriptsize($\uparrow$ 5.8\%) \\
        \bottomrule
    \end{tabular}
    \vskip -0.15in
\end{table*}

\subsection{Main Results} \label{exp:main}
In this section, we present the experimental results of R2M on dialogue and document summarization tasks.
For dialogue task,  We considered the current mainstream evaluation frameworks, utilizing queries from  UltraFeedback \citep{cui2023UltraFeedback} for online RL optimization and conducting evaluations with  AlpacaEval 2 \cite{dubois2024length} and MT-Bench \cite{zheng2023judging} , which are  widely used chat-based evaluation benchmarks.
Next, we considered a classic RLHF task, summarization: given  a forum post from Reddit, the policy must generate a summary of the main points in the post.

\textbf{(1) R2M consistently achieves superior performance.}
As shown in Table~\ref{tab:main_dialogue} and Table \ref{tab:main_summarization}, the incorporation of policy feedback and iterative updates of the reward model enable R2M to achieve the highest scores across all evaluation metrics. 
Specifically, both the RLOO+R2M tuned models and the GRPO+R2M tuned  models achieve either the best or second best performance across all evaluation metrics. 
Moreover, they significantly outperform all baseline methods.
These results underscore the broad applicability of R2M in preference optimization and its effectiveness in aligning LLMs with human preferences. 

Conversely, \textit{R2M w/o Train} not only fails to provide any improvement but actually degrades the performance of vanilla RL algorithms.
This indicates that the lighter-weight approach of directly utilizing feedback information without any adaption is not viable.

\begin{table}[h]
    \centering
    \vskip 0.2in
    \small 
    \caption{Performance of R2M compared with baselines on Summarization Task (Pythia-2.8B-TL;DR). WR denotes the raw win rate. Relative changes are compared with the base model (SFT).}
    \label{tab:main_summarization}
    \begin{tabular}{l|c}
        \toprule
        \textbf{Method} & \textbf{WR(\%)} \\
        \midrule
        SFT & 42.3 \\
        ReMax & 75.1 \scriptsize($\uparrow$ 77.5\%) \\
        REINFORCE++ & 74.3 \scriptsize($\uparrow$ 75.6\%) \\
        \midrule
        GRPO & 75.2 \scriptsize($\uparrow$ 77.8\%) \\
        + R2M w/o Train & 51.1 \scriptsize($\uparrow$ 20.8\%) \\
        + Pretrained RM & 66.3 \scriptsize($\uparrow$ 56.7\%) \\
        + Iterative RM$_{\text{Head}}$ & 67.0 \scriptsize($\uparrow$ 58.4\%) \\
        + R2M & \underline{81.0} \scriptsize($\uparrow$ 91.5\%) \\
        \midrule
        RLOO & 75.3 \scriptsize($\uparrow$ 78.0\%) \\
        + R2M w/o Train & 50.6 \scriptsize($\uparrow$ 19.6\%) \\
        + Pretrained RM & 67.3 \scriptsize($\uparrow$ 59.1\%) \\
        + Iterative RM$_{\text{Head}}$ & 66.9 \scriptsize($\uparrow$ 58.1\%) \\
        + R2M & \textbf{81.6} \scriptsize($\uparrow$ 92.9\%) \\
        \bottomrule
    \end{tabular}
     \vskip -0.2in
\end{table}

\textbf{(2) R2M enhances the vanilla RM efficiently.} 
Compared to RLOO, RLOO+R2M achieved a 2.9\% to 6.1\% increase in LC win rate, a 5.2\% to 8.0\% increase in raw win rate, and a 6.3\% increase in TL;DR  win rate.
As the sole difference between RLOO+R2M and RLOO lies in the replacement of a frozen RM with one iteratively updated and allocating rewards via policy feedback, 
these substantial improvements are entirely due to the stronger reward model of RLOO+R2M. 
This clearly demonstrate the effectiveness of R2M’s integration of feedback to  iteratively enhance the reward model. 
To further validate this,  we compare the performance of the RM on the test set of UltraFeedback before and after running the R2M+RLOO pipeline, as experimental details shown in Appendix \ref{app:rm_acc}.
As shown in Table~\ref{tab:rm_acc}, after iterative updates, R2M achieves accuracy improvements of 5.1\% and 6.3\% compared to the vanilla RM. These results indicate that R2M significantly enhances the accuracy of the RM, which is crucial for preventing reward overoptimization and improving training effect~\citep{rafailov2023direct, lambert2024rewardbench, adler2024nemotron}. 
\begin{table} \small
  \centering
    \caption{ Comparison of  the accuracy of reward models on the test set of UltraFeedback. ``Vanilla RM" refers to the frozen reference reward model, while "R2M" represents the reward model before and after the RLOO+R2M pipeline.} 
  \label{tab:rm_acc}
  \begin{tabular}{lcc}  
    \toprule
    \multirow{2}{*}{\textbf{Reward Model}} & \multicolumn{2}{c}{\textbf{Win Rate(\%)}} \\  
    \cmidrule(lr){2-3}  
    & \textbf{Qwen2.5} & \textbf{LLaMA3} \\
    \midrule
    Vanilla RM & 72.3  & 72.3 \\
    R2M (Before-Training)  & 68.3  &  69.2 \\
    R2M (After-Training) & 77.4  &  78.6 \\
    \bottomrule
  \end{tabular}

\end{table}

\begin{figure*}
\begin{center}
 \includegraphics[width=0.78\textwidth]{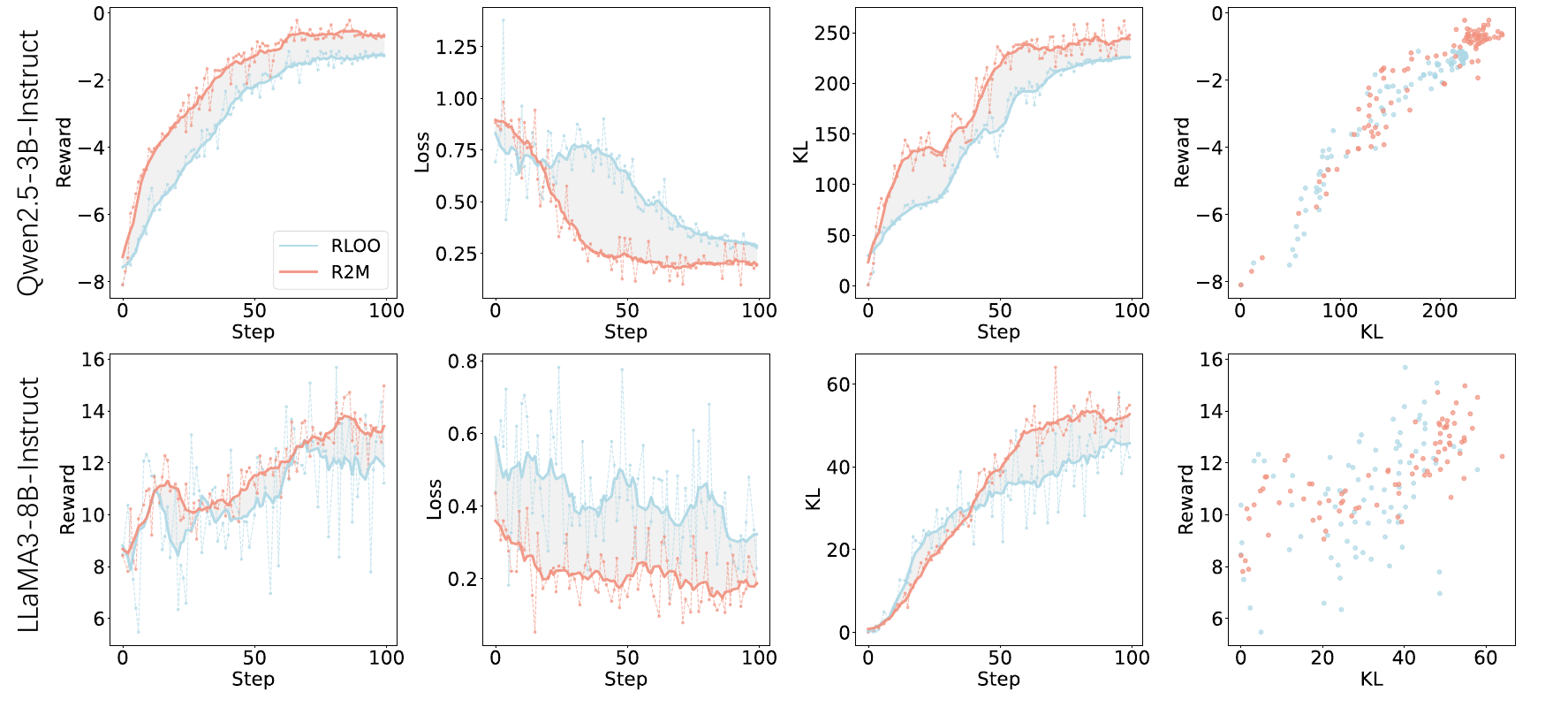} 
\end{center}
\caption{
We compare RLOO and RLOO+R2M in terms of loss, reward and KL divergence during RL optimization, using Qwen2.5-3B-Instruct and LLaMA3-8B-Instruct as policy models, and Skywork-Reward-V2-Llama-3.1-8B as the reward model. For KL divergence, we calculate it as the average of log probability differences between the reference model and the policy model for each token.} 
\label{fig:qwen_main_analysis}

\vskip -0.2in
\end{figure*}

On the other hand, the performance gain of \textit{Pretrained RM}  remains quite limited. This is likely because the vanilla RM has already undergone extensive post-training, causing its capability to approach convergence on the training data.
In contrast, \textbf{R2M achieves substantial and significant breakthroughs in RM capability with the same amount of training data while introducing far fewer tunable parameters.} Specifically, GRPO + R2M outperforms GRPO + Pretrained RM by 2.9\% to 4.1\% on Alpaca-Eval LC, and by 2.7\% to 5.1\% on Alpaca-Eval WR. We can observe a similar phenomenon on the comparison of RLOO+R2M and RLOO+Pretrained RM.
This enhancement can be attributed to two factors: real-time alignment with the policy model and additionally introduced deep semantic understanding, thanks to the rich information from policy feedback discussed in Section \ref{sec:motivation}.

\begin{figure}
  \centering
  \includegraphics[width=0.9\columnwidth]{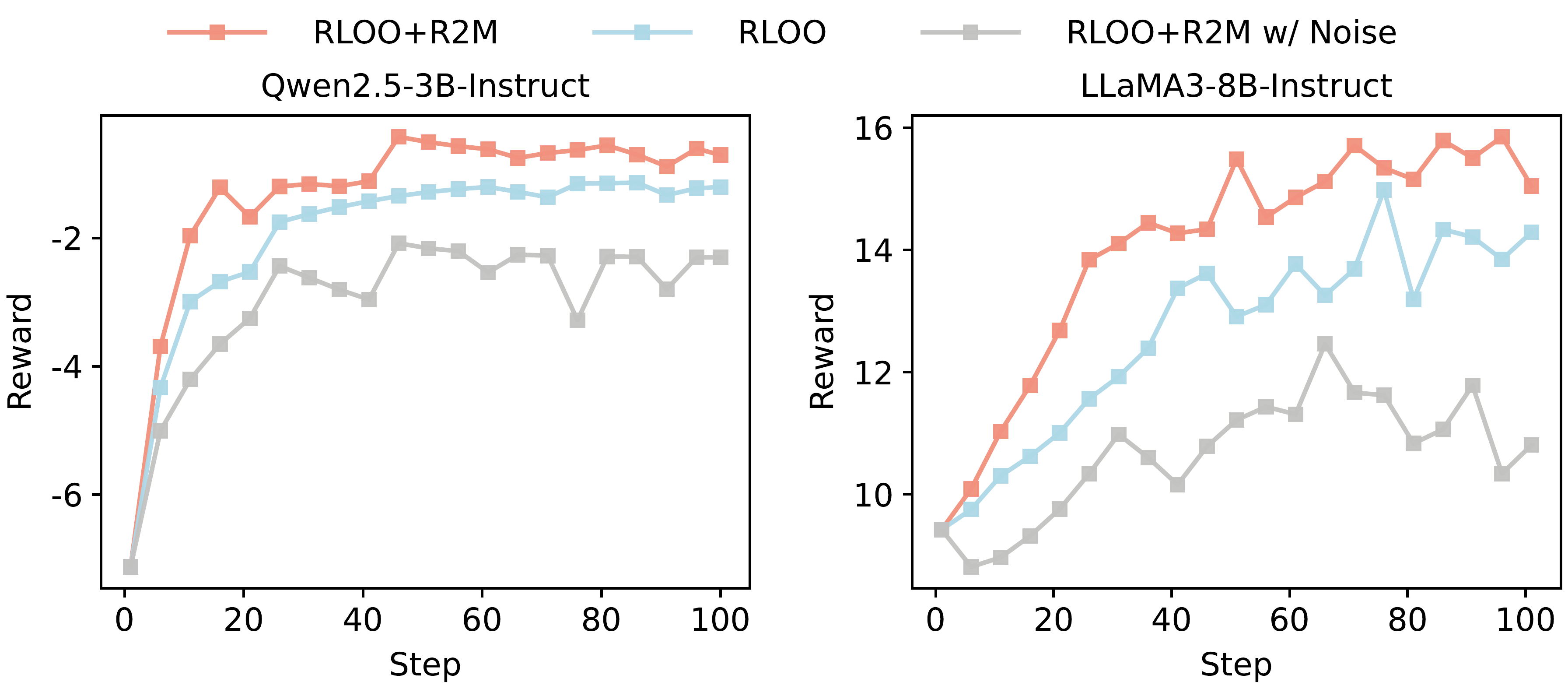}
  \caption{Comparison of average rewards in RL Optimization. w/ Noise denotes replacing the feedback in R2M with Gaussian noise.
  } 
  \vskip -0.2in
  \label{fig:reward_compare}
\end{figure}

\textbf{(3) Policy feedback plays a crucial role in R2M updates.}
\textit{Iterative RM$_{\text{Head}}$} achieves performance surpassing vanilla RL algorithms and approaching that of the pretrained reward model through lightweight iterative updates to the reward head. This demonstrates the effectiveness of iteratively fine-tuning the reward model. Nevertheless, the improvement over vanilla RL remains quite limited. The primary reason is that \textbf{this approach constructs pseudo-labels using reward signals generated by the vanilla RM itself}.

In contrast, R2M exhibits consistent and significant superiority over  Iterative RM$_{\text{Head}}$ across all experimental settings. This notable performance gain originates from the recomputation of reward signals  before RM updates, where policy feedback information is explicitly incorporated into the calculation process.  These results strongly indicate that \textbf{policy feedback introduces valuable new information into the reconstructed reward distribution}. Furthermore, this supplementary information is effectively leveraged to guide the parameter updates of R2M via our tailored $\mathcal{L}_{\text{GREBT}}$.

\subsection{Analysis} \label{exp:analysis}
In this section, we present additional analytical experiments to clarify the reasons behind R2M's effectiveness in RL optimization from a principled perspective.


\textbf{(1) R2M maintains reward consistency while allocating higher rewards.} \label{exp:reward_compare}
Every 5 training steps, we sampled 128 queries from the \textit{test set}, prompted $\pi_{\theta}$ to generate responses. Then, we scored them  with the reward model and illustrated the average results in Figure \ref{fig:reward_compare}. 
The iteratively updated reward model in RLOO+R2M exhibits a similar reward trend compared to the vanilla frozen RM in RLOO and consistently assigns higher rewards, indicating that R2M can reliably provide reasonable and well-calibrated reward signals.
In contrast, reward scores from R2M w/ Noise are significantly lower, which confirms that policy feedback carries beneficial information for enhancing the vanilla reward model, consistent with Section~\ref{sec:motivation}.
We hypothesize that the higher reward allocation results of R2M stems from the GRE loss, which encourages the RM to assign higher reward values to high-quality responses with greater confidence.

\textbf{(2) R2M encourages substantial and effective policy updates.}
Figure~\ref{fig:qwen_main_analysis} illustrates the training dynamics for the dialogue task. 
RLOO+R2M demonstrates a significantly higher reward curve and lower loss curve compared to RLOO. Generally, this indicates more effective training outcomes.
From the perspective of KL divergence, R2M encourages larger parameter shifts in the model to achieve greater rewards. 
Furthermore, RLOO+R2M yields a noticeably denser concentration of points in the high-KL-divergence \& high-reward region compared to vanilla RLOO. 
This indicates that \textbf{R2M effectively encourages more aggressive policy updates by assigning systematically higher rewards}.
Aggressive policy updates readily lead to reward overoptimization \citep{coste2023reward},  but R2M still outperform the vanilla RL algorithms significantly demonstrated in Section \ref{exp:main}.
In summary, R2M effectively improves the RM's resistance to policy's exploitation of specific patterns, enabling more aggressive policy updates in the correct direction without triggering reward overoptimization.



\subsection{Computational Cost Analysis}

R2M is lightweight and compute-efficient. 
In Figure~\ref{fig:usage}(a), we compare the peak single-GPU memory footprint and total runtime of RLOO+R2M against RLOO in the LLaMA environment.
In Figure~\ref{fig:usage}(b), we compare the peak GPU memory consumption and runtime between performing a full reward model update in a single training iteration and the lightweight update mechanism of R2M. To avoid out-of-memory (OOM) issues and isolate the cost of RM updates, we disable gradient computation on the policy model.

\begin{figure}
  \centering
  \includegraphics[width=0.8\columnwidth]{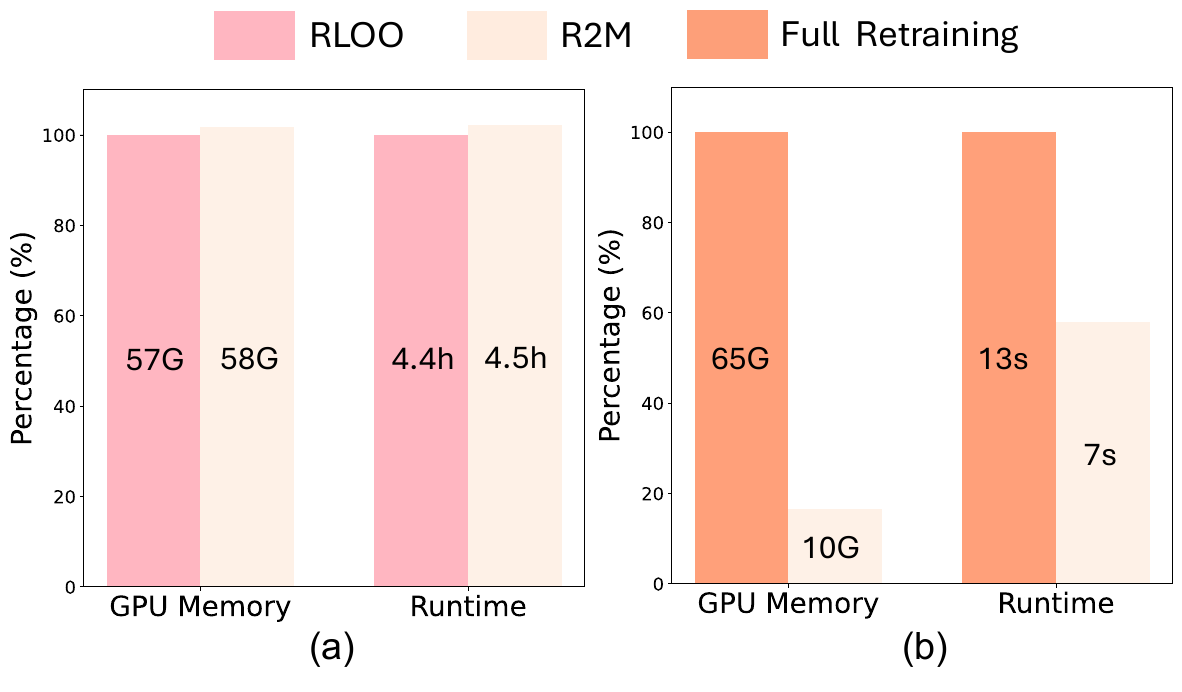}
  \caption{(a) Computational cost comparison between RLOO and RLOO+R2M. (b) Computational cost comparison between full reward model updates and lightweight updates in R2M.
} 
  \label{fig:usage}
  \vskip -0.3in
\end{figure}

\textbf{R2M substantially reduces the time and memory overhead of full reward model updates}, while incurring negligible additional computational cost compared to the significant performance improvements it achieves.
This can attribute to two main factors. First, policy feedback can be directly obtained and its aggregation solely involves lightweight attention computations. Second, R2M does not update the reward model's LLM part, and its cross-attention module and scoring head are relatively lightweight.

\subsection{Ablation Study} \label{exp:abstudy}

In this section, we perform detailed ablation studies to assess the effectiveness of each component in R2M. Based on the LLaMA3  setup outlined in Section~\ref{exp:main}, we systematically remove key modules of R2M and evaluate their impact on experimental results, as presented in Table \ref{tab:abstudy}.

\begin{table}[H]
\caption{Ablation study results on AlpacaEval 2.}
\label{tab:abstudy}
  \centering
  \small

  \begin{tabular}{l|c|c|c}
    \toprule
    \textbf{Method}                 & \textbf{LC(\%)}                          & \textbf{WR(\%)}                          & \textbf{LEN} \\
    \midrule 
    SFT & 22.9 & 22.6 & 1899 \\
    \midrule
    RLOO                            & 28.4 \scriptsize($\downarrow$ 17.7\%)    & 30.2 \scriptsize($\downarrow$ 20.9\%)    & 2186           \\
    \midrule
    +R2M w/ Noise                 & 25.4 \scriptsize($\downarrow$ 26.4\%)    & 26.4 \scriptsize($\downarrow$ 30.9\%)    & 2276           \\
    +R2M w/o Train                  & 24.4 \scriptsize($\downarrow$ 29.3\%)    & 27.4 \scriptsize($\downarrow$ 28.3\%)    & 2366           \\

    +R2M w/o BT                & 31.5 \scriptsize($\downarrow$ 8.7\%)     & 35.7 \scriptsize($\downarrow$ 6.5\%)     & 2116           \\
    +R2M w/o GRE              & 32.3 \scriptsize($\downarrow$ 6.4\%)     & 36.2 \scriptsize($\downarrow$ 5.2\%)     & 2191           \\
    \midrule
    +R2M              & 34.5      & 38.2     & 2011          \\

    \bottomrule
\end{tabular}
\end{table}

\textbf{R2M w/ Noise} $\And$ \textbf{R2M w/o Train}: For R2M w/ Noise, we replace the feedback information with Gaussian noise of equivalent mean and variance.
We observe that both methods yield only similar and limited performance improvements on the SFT model, and these gains are substantially smaller than those achieved by standard R2M.
The improvement primarily stems from the dominant role of the original RTE in the early stage of training. 
Apart from this, the similarity in experimental results suggests that injecting policy feedback into an unfine-tuned reward model yields effects essentially equivalent to injecting noise.
These results suggest that to effectively incorporate feedback information, updating R2M is necessary, which aligns with Section \ref{exp:main}.

\textbf{R2M w/o BT} $\And$ \textbf{R2M w/o GRE}: We compare the results of R2M trained with the GREBT loss against those of R2M optimized with a single objective (i.e., either the GRE loss or the BT loss alone).
Compared to R2M trained with GREBT loss, we observed that removing the BT loss resulted in a decrease of 3.0 and 2.5 in LC and WR scores, respectively. 
This phenomenon stems from the inherent reliance of RM training on the BT loss.
On the other hand, when the GRE loss was removed, the scores dropped  2.2 and 2.0 respectively.  
This is mainly caused by the group degeneration phenomenon mentioned in Section \ref{sec:rm_train}.
These results clearly indicate that utilizing a mixed loss as the optimization objective outperforms a single objective. 

In summary, each component of R2M is indispensable and effective, as the ablation of any single component leads to a significant performance degradation. 



\section{Discussion}

\subsection{Potential in Multi-node Communication Scenarios}

R2M demonstrates strong scalability in multi-node communication scenarios. 
Although cross-node communication is a valid concern for distributed deployment, 
this overhead can be substantially reduced through the following design choices.

\textbf{Communication Volume Reduction.} 
Rather than transmitting the full hidden state tensor $(B \times S \times D_p)$, we can place a lightweight copy of the cross-attention module on the policy node 
(parameter count: only $2 \times (D_r \times d + D_p \times d)$). During rollout, only the RM's reward token embedding $(B \times D_p)$ is sent to the policy node for cross-attention aggregation, 
and the updated embedding is sent back. This reduces per-step communication from $B \times S \times D_p$ to $2 \times B \times D_p$. Critically, this cost no longer scales with sequence length $S$.

\textbf{Asynchronous Updates.} 
Since the RM update is faster than policy optimization, the updated cross-attention module can be synchronized to the policy node while the policy continues training, avoiding network blocking.

With these optimizations, R2M adds only $2 \times B \times D_p$ in data transfer during rollout, and training-time communication remains identical to standard RL.

\subsection{The Effectiveness of GRE Loss}

When trajectory qualities are similar and reward noise exceeds the true score differences, GRPO may reinforce some trajectories randomly while suppressing others. 
The GRE loss addresses this by \textbf{enlarging reward gaps}, allowing the policy to preferentially reinforce trajectories with slight advantages rather than updating arbitrarily.

\textbf{No Artificial Inflation.} 
When all responses are truly identical, \textbf{the GRE loss gradient is exactly zero}, ensuring that it does not create artificial distinctions. 
GRE activates only when genuine quality differences exist, sharpening the reward distribution without altering the correct ranking, thus alleviating \textit{group degeneration}.

Formally, if all standardized inputs $z_j$ are equal, Softmax produces a uniform distribution $p_j = 1/K$, and the GRE loss (negative entropy) reaches its maximum $\log K$. 
The gradient w.r.t. $z_i$ is:

\[
\begin{aligned}
\frac{\partial \mathcal{L}_{\mathrm{GRE}}}{\partial z_i} 
&= \sum_{j=1}^{K} \frac{\partial \mathcal{L}_{\mathrm{GRE}}}{\partial p_j} \frac{\partial p_j}{\partial z_i} \\
&= (\log K - 1) \sum_{j=1}^{K} p_j (\delta_{ij} - p_i) \\
&= 0
\end{aligned}
\]

implying no parameter updates occur. 
Thus, GRE only amplifies differences once meaningful quality distinctions exist.

\textbf{Robustness under Reward Noise.} 
For responses with identical ground truth rewards, observed rewards may differ due to noise:
\[
r_j = r_j^* + \epsilon_j, \quad r_k = r_k^* + \epsilon_k, \quad \epsilon_j, \epsilon_k \sim \mathcal{N}(0, \sigma^2).
\]
Since GRE operates over the expected distribution rather than individual samples, the expected gradient satisfies:
\[
\mathbb{E}\left[\frac{\partial \mathcal{L}_{\mathrm{GRE}}}{\partial \Delta r_{jk}}\right] = 0 \quad \text{when } r_j^* = r_k^*,
\]
ensuring that spurious noise does not systematically bias the model.

When true rewards differ ($r_j^* \neq r_k^*$), this symmetry breaks, yielding non-zero expected gradients in the correct direction. 
Consequently, GRE \textbf{robustly amplifies genuine quality differences} while washing out noise-induced variations over the data distribution.

\section{Conclusion}
To achieve real-time alignment towards  policy's distribution shifts efficiently, we propose \textbf{R2M}, a novel lightweight RLHF framework. By incorporating the policy’s evolving hidden states, R2M enhances the vanilla RM while maintaining robustness against reward overoptimization.
Without modifying current RLHF algorithms, simply integrating R2M into the framework achieves significant performance improvements while introducing only marginal additional computational costs.

\newpage

\section*{Impact Statement}
Our proposed R2M offers several significant advantages and has far-reaching potential applications. 
By incorporating real-time feedback from the policy model, R2M addresses a critical limitation of traditional reward models, enabling iterative alignment with the policy model and more accurate reward allocation. 
Its seamless integration with current RLHF algorithms without altering the core mechanism and minimal computational overhead make it highly practical for both research and real-world use. In natural language processing (NLP), R2M can enhance chatbots, virtual assistants, and content generation systems, improving user experiences and text quality. 

While our method has broad applicability across domains, we do not foresee specific societal risks or negative impacts that require special consideration, as R2M focuses on enhancing the reward model in RL optimization of RLHF framework and maintains the ethical and societal implications consistent with standard  RLHF practices.

\section*{Acknowledgements}
This research was supported by  NSFC (No. 62276015, No. 62506024, No. 62506319) and GW2025-09.

\bibliography{example_paper}
\bibliographystyle{icml2026}

\newpage
\appendix
\onecolumn


\section{Theoretical Analysis}
\label{app:theoretical_analysis}
This section provides theoretical support for the core components of R2M: the incorporation of the internal hidden states from policy models and   the GRE loss.
These analyses theoretically justify why R2M can enhance the vanilla RM while maintaining robustness against reward overoptimization.

\subsection{Proof of Theorem \ref{theo:reward}}
\label{app:distribution_shift_bound}

\textbf{Restatement of Theorem \ref{theo:reward}:}
Introducing policy hidden states into the vanilla reward model  strictly tightens the upper bound on reward misalignment compared to the vanilla reward model, when the post-fusion alignment quality $\gamma^{(t)} > 0$:
\begin{align}
    \epsilon_{\text{R2M}}^{(t)} &\leq (1 - \gamma^{(t)})^{1/2} \cdot C + \Delta\mathcal{D}^{(t)} \cdot L, \\
    \epsilon_{\text{vanilla}}^{(t)} &\leq C + \Delta\mathcal{D}^{(t)} \cdot L,
\end{align}
where $C = L_h \cdot D \cdot \sqrt{2}$ represents the worst-case semantic deviation bound in the fused representation space.

Let $\pi_\theta^{(t)}$ denote the policy model at RL training step $t$. 
Classically, the  reward model $r_\varphi$ and the policy model $\pi_\theta$ shares the same initial distribution $\mathcal{D}^{(0)} \sim \pi_\theta^{(0)}$ \cite{ouyang2022training,ahmadian2024back}. 
As training proceeds, the policy induces a drifted distribution $\mathcal{D}^{(t)}$, causing reward misalignment in models that do not adapt to this shift.

R2M considers a more general scenario, where \textbf{the reward model and the policy model are heterogeneous}.
In this case, the aforementioned issues persist as well, and the semantic gap stems not only from distribution shift but also from the profound discrepancies between the foundation models.
We train a lightweight \textbf{Sequence-to-Token Cross Attention} module $M_{\text{ca}}$ (as introduced in Section \ref{sec:rm_feedback})  to bridge this gap: it takes policy hidden states as key/value and reward model's internal features as query, producing post-fusion features $h_f^{(t)} = M_{\text{ca}}(h^{(t)})$ in the reward model's representation space.

The following definitions apply to the fused representation space:

\begin{definition}[Distribution Shift Degree]
$\Delta\mathcal{D}^{(t)} = \mathrm{TV}(\mathcal{D}^{(t)} \| \mathcal{D}^{(0)})$.
\end{definition}

\begin{definition}[Reward Misalignment Error]
$\epsilon^{(t)} = \mathbb{E}_{(x,y) \sim \mathcal{D}^{(t)}} \bigl| r_\phi(x,y) - r^*(x,y) \bigr|$,
where $r^*(x,y)$ is the true underlying human preference reward.
\end{definition}

\begin{definition}[Post-Fusion Hidden State Alignment Quality]
$\gamma^{(t)} = \mathbb{E}_{(x,y) \sim \mathcal{D}^{(t)}} \cos\bigl( h_f^{(t)}(x,y),\ h_f^*(x,y) \bigr) \in [0,1]$,
where $h_f^{(t)}(x,y)$ is the fused hidden state (direct $h^{(t)}$ or $M_{\text{ca}}(h^{(t)})$), and $h_f^*(x,y)$ is the ideal fused representation corresponding to the truly preferred response.
\end{definition}

\begin{definition}[Lipschitz Constants]
$L$ is the Lipschitz constant of the reward head w.r.t.\ the query-response pair; $L_h$ is the Lipschitz constant w.r.t.\ the fused hidden state.
\end{definition}

\begin{definition}[Hidden State Norm Bound]
$\|h_f\|_2 \leq D$ and $\|h_f^*\|_2 \leq D$ for all fused hidden states.
\end{definition}

\begin{definition}[Maximum Hidden-State Semantic Deviation]
\label{def:C}
$C := L_h \cdot D \cdot \sqrt{2}$ is the worst-case reward deviation in the fused representation space caused by the most misaligned fused hidden state (achieved in the limit as $\cos(h_f, h_f^*) \to -1$).

This constant is the prefactor that arises from bounding the Euclidean distance between two vectors of bounded norm $\|h_f\|_2 \leq D$ and $\|h_f^*\|_2 \leq D$. Specifically, by the law of cosines, we have
\begin{equation}
\|h_f - h_f^*\|_2^2 = \|h_f\|_2^2 + \|h_f^*\|_2^2 - 2 \|h_f\|_2 \|h_f^*\|_2 \cos(h_f, h_f^*) 
\leq 2D^2 (1 - \cos(h_f, h_f^*)).
\end{equation}

Taking the square root gives the tight worst-case distance bound

\begin{equation}
\|h_f - h_f^*\|_2 \leq D \sqrt{2(1 - \cos(h_f, h_f^*))}.
\end{equation}

Since the reward head is $L_h$-Lipschitz continuous with respect to the fused hidden state, the induced deviation in reward is bounded by

\begin{equation}
\bigl| r_\phi(x,y, h_f) - r_\phi(x,y, h_f^*) \bigr| \leq L_h \cdot \|h_f - h_f^*\|_2 \leq L_h \cdot D \cdot \sqrt{2(1 - \cos(h_f, h_f^*))}.
\end{equation}

When $\cos(h_f, h_f^*) \to -1$ (the most adverse case), this reaches its maximum value of $L_h \cdot D \cdot \sqrt{4} = 2 L_h D$. For general alignment quality $\gamma = \cos(h_f, h_f^*)$, we can factor the expression as

\begin{equation}
L_h \cdot D \cdot \sqrt{2(1 - \gamma)} = (1 - \gamma)^{1/2} \cdot (L_h \cdot D \cdot \sqrt{2}),
\end{equation}

which is exactly the form used in the proof: the misalignment term is at most $(1 - \gamma^{(t)})^{1/2} \cdot C$. Thus $C = L_h \cdot D \cdot \sqrt{2}$ conveniently encapsulates the geometric worst-case factor from the bounded-norm ball in the fused representation space.
\end{definition}

\begin{proof}
We separately bound the misalignment error for the vanilla reward model and for R2M (with hidden state fusion), then compare the resulting upper bounds. All quantities are defined in the fused representation space; the vanilla model is treated as having no access to fused hidden state information, corresponding to the worst-case scenario in this space.

\paragraph{Part 1: Upper bound for vanilla reward model $\epsilon_{\text{vanilla}}^{(t)}$}

The vanilla reward model $r_{\phi,\text{vanilla}}(x,y)$ receives only the query-response pair. We bound its error via the ideal reward $r^*(x,y)$ and an auxiliary quantity $r_\phi(x,y, h_f^*)$ (the output of the R2M architecture when provided with the ideal fused hidden state $h_f^*$):

\begin{align}
    \bigl| r_{\phi,\text{vanilla}}(x,y) - r^*(x,y) \bigr|
    &\leq \bigl| r_{\phi,\text{vanilla}}(x,y) - r_\phi(x,y, h_f^*) \bigr|
    + \bigl| r_\phi(x,y, h_f^*) - r^*(x,y) \bigr|.
\end{align}

The second term is bounded using the training distribution shift and Lipschitz continuity w.r.t.\ $(x,y)$:

\begin{equation}
    \mathbb{E}_{\mathcal{D}^{(t)}} \bigl| r_\phi(x,y, h_f^*) - r^*(x,y) \bigr| \leq \Delta\mathcal{D}^{(t)} \cdot L.
    \label{eq:dist_shift_term}
\end{equation}

For the first term, note that the vanilla model has no mechanism to incorporate policy-specific hidden state information. In the worst case, its output deviates from the ideal fused-R2M output $r_\phi(x,y, h_f^*)$ by up to the maximum semantic deviation inducible in the fused representation space (as defined by $C$). This holds under the assumption that the vanilla model's representation capacity does not exceed the range spanned by the fused space under ideal alignment, yielding:

\begin{equation}
    \bigl| r_{\phi,\text{vanilla}}(x,y) - r_\phi(x,y, h_f^*) \bigr| \leq C.
\end{equation}

Taking expectation over $\mathcal{D}^{(t)}$ gives

\begin{equation}
    \epsilon_{\text{vanilla}}^{(t)} \leq C + \Delta\mathcal{D}^{(t)} \cdot L.
    \label{eq:vanilla_final}
\end{equation}

\paragraph{Part 2: Upper bound for R2M $\epsilon_{\text{R2M}}^{(t)}$}

For the hidden-state-aware reward model $r_\phi(x,y, h_f)$ (where $h_f$ is obtained directly or via the trained $M_{\text{ca}}$), we decompose:

\begin{align}
    \bigl| r_\phi(x,y, h_f) - r^*(x,y) \bigr|
    &\leq \bigl| r_\phi(x,y, h_f) - r_\phi(x,y, h_f^*) \bigr|
    + \bigl| r_\phi(x,y, h_f^*) - r^*(x,y) \bigr|.
\end{align}

The second term is bounded by \eqref{eq:dist_shift_term}. For the first term, by $L_h$-Lipschitz continuity w.r.t.\ the fused hidden state:

\begin{equation}
    \bigl| r_\phi(x,y, h_f) - r_\phi(x,y, h_f^*) \bigr| \leq L_h \cdot \|h_f - h_f^*\|_2.
\end{equation}

From the cosine similarity definition and norm bounds $\|h_f\|_2, \|h_f^*\|_2 \leq D$, we obtain (in the worst case):

\begin{equation}
    \|h_f - h_f^*\|_2^2 \leq 2D^2 (1 - \cos(h_f, h_f^*)),
\end{equation}
\begin{equation}
    \|h_f - h_f^*\|_2 \leq D \sqrt{2(1 - \gamma^{(t)})}.
\end{equation}

Substituting yields

\begin{equation}
    \bigl| r_\phi(x,y, h_f) - r_\phi(x,y, h_f^*) \bigr| \leq L_h \cdot D \cdot \sqrt{2(1 - \gamma^{(t)})} = (1 - \gamma^{(t)})^{1/2} \cdot C.
\end{equation}

Taking expectation over $\mathcal{D}^{(t)}$ finally gives

\begin{equation}
    \epsilon_{\text{R2M}}^{(t)} \leq (1 - \gamma^{(t)})^{1/2} \cdot C + \Delta\mathcal{D}^{(t)} \cdot L.
    \label{eq:r2m_final}
\end{equation}

\paragraph{Comparison}

The trained cross-attention module (or direct fusion) enables non-trivial post-fusion alignment ($\gamma^{(t)} \in (0,1]$ in practice). Thus $(1 - \gamma^{(t)})^{1/2} < 1$, implying

\begin{equation}
    (1 - \gamma^{(t)})^{1/2} \cdot C + \Delta\mathcal{D}^{(t)} \cdot L < C + \Delta\mathcal{D}^{(t)} \cdot L.
\end{equation}

This establishes that R2M enjoys a strictly tighter upper bound on reward misalignment whenever $\gamma^{(t)} > 0$.
\end{proof}

\begin{corollary}
When hidden states are perfectly aligned after fusion ($\gamma^{(t)} = 1$), the upper bound simplifies to $\epsilon_{\text{R2M}}^{(t)} \leq \Delta\mathcal{D}^{(t)} \cdot L$, 
i.e., misalignment is solely controlled by distribution shift.
\end{corollary}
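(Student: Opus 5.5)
The corollary is a direct specialization of Theorem~\ref{theo:reward}. I would substitute $\gamma^{(t)} = 1$ into the R2M bound $\epsilon_{\text{R2M}}^{(t)} \leq (1-\gamma^{(t)})^{1/2}\cdot C + \Delta\mathcal{D}^{(t)}\cdot L$ established in Part~2 of its proof. Since $C = L_h D\sqrt{2}$ is finite, the first term becomes $0\cdot C = 0$, which leaves $\epsilon_{\text{R2M}}^{(t)} \leq \Delta\mathcal{D}^{(t)}\cdot L$. The interpretation then follows: the only remaining contribution is the distribution-shift term, bounded in \eqref{eq:dist_shift_term}.

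To make this rigorous rather than purely formal, I would check what $\gamma^{(t)}=1$ means pointwise. By definition $\gamma^{(t)}$ is the expectation over $\mathcal{D}^{(t)}$ of a cosine similarity, and each cosine is at most $1$. Hence $\gamma^{(t)}=1$ forces $\cos(h_f^{(t)}(x,y), h_f^*(x,y)) = 1$ for $\mathcal{D}^{(t)}$-almost every $(x,y)$. The per-sample estimate from Part~2 is
\[
\bigl|r_\phi(x,y,h_f) - r_\phi(x,y,h_f^*)\bigr| \leq L_h D\sqrt{2(1-\cos(h_f,h_f^*))}.
\]
Taking expectations of this estimate, I would note that $u\mapsto\sqrt{1-u}$ is concave. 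Jensen's inequality then gives $\mathbb{E}\sqrt{1-\cos} \leq \sqrt{1-\gamma^{(t)}}$, which is exactly how the theorem's bound passes from per-sample cosines to $\gamma^{(t)}$. At $\gamma^{(t)}=1$ the fusion term therefore has zero expectation.

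Combining this with the triangle-inequality decomposition of Part~2 and the distribution-shift bound \eqref{eq:dist_shift_term} gives the claim.

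The only delicate point is the geometric step. With merely $\|h_f\|,\|h_f^*\|\leq D$, equal directions do not literally force $h_f = h_f^*$, since the norms may differ. I would therefore rely on the bound $\|h_f - h_f^*\|_2^2 \leq 2D^2(1-\cos)$ exactly as it is used in Definition~\ref{def:C} and the theorem's proof, treating it as an assumption already granted by the paper rather than reproving it. With that accepted, no other obstacle remains.
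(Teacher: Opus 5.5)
Your proposal is correct and is essentially the paper's argument: the corollary follows by setting $\gamma^{(t)}=1$ in the R2M bound of Theorem~\ref{theo:reward}, which removes the $(1-\gamma^{(t)})^{1/2}C$ term. Your extra checks are sound refinements the paper does not make: the Jensen step repairs the proof's silent replacement of the per-sample cosine by $\gamma^{(t)}$, and you are right that $\|h_f-h_f^*\|_2^2\le 2D^2(1-\cos)$ fails for parallel vectors of unequal norm, so at $\gamma^{(t)}=1$ the assumption you inherit amounts to assuming $h_f=h_f^*$ almost everywhere.
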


\begin{corollary}
The benefit of hidden state fusion becomes more pronounced as distribution shift $\Delta\mathcal{D}^{(t)}$ grows, because the reducible portion $\bigl[1 - (1 - \gamma^{(t)})^{1/2}\bigr]C$ constitutes a larger relative improvement in the total bound. This provides theoretical support for the observed long-horizon stability of R2M in experiments, even under significant distribution drift.
\end{corollary}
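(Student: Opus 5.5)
The plan is to read the corollary off the two bounds of Theorem~\ref{theo:reward} by studying their difference as a function of $\Delta\mathcal{D}^{(t)}$. Write $B_{\text{van}}^{(t)} := C + \Delta\mathcal{D}^{(t)} L$ and $B_{\text{R2M}}^{(t)} := (1-\gamma^{(t)})^{1/2} C + \Delta\mathcal{D}^{(t)} L$. The first step subtracts them. The shift term $\Delta\mathcal{D}^{(t)} L$ is identical in both bounds, since it comes from the same auxiliary decomposition through $r_\phi(x,y,h_f^*)$ in \eqref{eq:dist_shift_term}. It therefore cancels, giving
\[
G^{(t)} := B_{\text{van}}^{(t)} - B_{\text{R2M}}^{(t)} = \bigl[1-(1-\gamma^{(t)})^{1/2}\bigr] C .
\]
This gap does not depend on $\Delta\mathcal{D}^{(t)}$ explicitly. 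It is strictly positive whenever $\gamma^{(t)}>0$, and it is increasing in $\gamma^{(t)}$, because $\gamma \mapsto 1-(1-\gamma)^{1/2}$ is increasing on $[0,1]$.

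The second step is to say precisely what ``more pronounced as $\Delta\mathcal{D}^{(t)}$ grows'' can mean. This is the main obstacle, and I would confront it directly. If $\gamma$ is held fixed, the relative improvement $G^{(t)}/B_{\text{van}}^{(t)} = \bigl[1-(1-\gamma)^{1/2}\bigr]C/(C+\Delta\mathcal{D}^{(t)}L)$ actually \emph{decreases} in $\Delta\mathcal{D}^{(t)}$. So the corollary cannot follow from the theorem alone under the naive reading. I would therefore prove two statements, each of which I can justify.

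\textbf{(a) Robustness of the absolute benefit.} Because $G^{(t)}$ is shift-free, the improvement does not erode as drift accumulates, whereas any non-adaptive correction would have to be measured against a growing $\Delta\mathcal{D}^{(t)}L$.

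\textbf{(b) Growth under the R2M training dynamics.} Here I would add the explicit hypothesis that $\gamma^{(t)}$ is non-decreasing along training steps in which $\Delta\mathcal{D}^{(t)}$ grows. The justification is that the cross-attention module and head are re-fit every step on the freshest $h_{i,j}$ via $\mathcal{L}_{\text{GREBT}}$, so post-fusion alignment tracks the policy rather than decaying. Under this hypothesis, $G^{(t)}$ is monotone non-decreasing in $t$, and hence in $\Delta\mathcal{D}^{(t)}$ along the trajectory.

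For a relative statement I would compare the reducible component with the part of the bound not caused by shift, via the ratio $B_{\text{R2M}}^{(t)}/B_{\text{van}}^{(t)}$ at the pre-shift baseline. Equivalently, I would note that $G^{(t)}$ as a fraction of the semantic term $C$, namely $1-(1-\gamma^{(t)})^{1/2}$, grows under (b).

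The final step assembles (a) and (b) into the corollary's qualitative claim: the benefit is preserved for any drift and grows along training. I would state explicitly that the literal ratio relative to the total bound requires $\gamma^{(t)}$ to grow fast enough. Concretely, I would differentiate the relative improvement along $t$ and require
\[
\frac{d}{dt}\log\bigl[1-(1-\gamma^{(t)})^{1/2}\bigr] \ \ge\ \frac{L\,\dot{\Delta\mathcal{D}}^{(t)}}{C+\Delta\mathcal{D}^{(t)}L}.
\]
I would record this as the condition under which the wording ``larger relative improvement in the total bound'' holds exactly.
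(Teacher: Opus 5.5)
Your computation is correct, and it exposes an error in the only justification the paper offers. The corollary has no proof beyond its own ``because'' clause, which asserts that $\bigl[1-(1-\gamma^{(t)})^{1/2}\bigr]C$ becomes a \emph{larger} relative improvement in the total bound as $\Delta\mathcal{D}^{(t)}$ grows. At fixed $\gamma^{(t)}>0$ (and $L>0$) the opposite holds:
\begin{itemize}
\item $G^{(t)}$ is constant in $\Delta\mathcal{D}^{(t)}$;
\item $G^{(t)}/B_{\text{van}}^{(t)}$ and $G^{(t)}/B_{\text{R2M}}^{(t)}$ strictly decrease in $\Delta\mathcal{D}^{(t)}$;
\item $B_{\text{R2M}}^{(t)}/B_{\text{van}}^{(t)}$ strictly increases toward $1$.
\end{itemize}
So by every relative measure the advantage is largest at zero shift. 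Theorem~\ref{theo:reward} charges both models the same $\Delta\mathcal{D}^{(t)}L$ and gives the vanilla model a shift-independent $C$. Hence no statement of the form ``benefit grows with shift'' can follow from it unless $\gamma^{(t)}$ is tied to the shift, which the theorem never does. Splitting the claim into an unconditional part and an explicitly conditional part, as you do, is therefore the right outcome, not a weakness of your route.

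A few things to tighten.
\begin{itemize}
\item Part (a) is unconditional only in that $G^{(t)}$ has no \emph{explicit} dependence on $\Delta\mathcal{D}^{(t)}$. Since $\gamma^{(t)}$ is itself an expectation over $\mathcal{D}^{(t)}$, ``does not erode as drift accumulates'' already requires $\gamma^{(t)}$ not to decay. That is the same kind of assumption you make explicit in (b).
\item In (b), state the monotonicity of $\gamma^{(t)}$ as a bare hypothesis. Nothing in the paper connects minimizing $\mathcal{L}_{\text{GREBT}}$ to $\gamma^{(t)}$: the ideal $h_f^*$ is never specified, and Theorem~\ref{theo:gre} concerns within-group reward entropy, not representation alignment. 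The ``re-fit every step'' rationale is therefore only heuristic.
\item As phrased, your hypothesis controls $\gamma^{(t)}$ only on steps where $\Delta\mathcal{D}^{(t)}$ increases. Conclude monotonicity of $G^{(t)}$ on those steps, not in $t$ generally.
\item Measuring $G^{(t)}$ against $C$ instead of the total bound is a change of statement, not an equivalent reading. Label it as such.
\item Your rate condition is the exact criterion for the literal wording. For discrete training steps it reads $g(\gamma^{(t+1)})/g(\gamma^{(t)}) \ge (C+\Delta\mathcal{D}^{(t+1)}L)/(C+\Delta\mathcal{D}^{(t)}L)$, with $g(\gamma)=1-(1-\gamma)^{1/2}$.
\end{itemize}
Finally, the corollary's last sentence about observed long-horizon stability is an empirical interpretation that no proof can supply.
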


In summary, fusing policy hidden states provably compresses the upper bound of reward misalignment by leveraging post-fusion alignment quality $\gamma^{(t)}$. 
This offers theoretical grounding for mitigating reward misalignment via incorporating policy feedback into the reward models.

\subsection{Proof of Theorem \ref{theo:gre}}
\label{app:gre_anti_collapse}

\textbf{Restatement of Theorem \ref{theo:gre}:}
The Group Reward Entropy (GRE) term in the GREBT loss strictly mitigates group degeneration of the reward model, and the mitigation strength increases monotonically with the weighting coefficient $\alpha \in (0,1]$. For any fixed group $G_i$, let $\varphi_0 = \arg\min_{\varphi} \mathcal{L}_{\text{BT}}(i;\varphi)$ denote the minimizer of the pure Bradley-Terry (BT) loss, and $\varphi_\alpha = \arg\min_{\varphi} \mathcal{L}_{\text{GREBT}}(i;\varphi;\alpha) = \arg\min_{\varphi} \bigl[(1-\alpha)\mathcal{L}_{\text{BT}}(i;\varphi) + \alpha \mathcal{L}_{\text{GRE}}(i;\varphi)\bigr]$ denote the minimizer of the GREBT loss.
If the reward model exhibits group degeneration at $\varphi_0$, where $C_i(\varphi_0) > 0$ and $C_i(\varphi)$ denotes the group degeneration degree, then:
    (1) $C_i(\varphi_\alpha) < C_i(\varphi_0)$ ;
    (2) $\Delta C_i(\alpha) := C_i(\varphi_0) - C_i(\varphi_\alpha)$ is strictly increasing in $\alpha$.

We first formalize key variables and definitions for a preference group with $K$ responses, then proceed with the proof by verifying the two claims sequentially. All quantities are defined for a fixed preference group $i$, and we omit the subscript $i$ for notational simplicity where no ambiguity arises.

\begin{definition}[Reward Score and Statistic]
For a preference group $(x, G)$ with $K$ responses $\{y_j\}_{j=1}^K$, $r_j = r_\varphi(x, y_j, h_j)$ is the reward score assigned by the reward model with parameter $\varphi$; $\mu = \frac{1}{K}\sum_{j=1}^K r_j$ is the mean reward score, and $\sigma = \sqrt{\frac{1}{K}\sum_{j=1}^K (r_j - \mu)^2} > 0$ is the standard deviation of reward scores in the group.
\end{definition}

\begin{definition}[Standardized Reward and Softmax Probability]
$z_j = \frac{r_j - \mu}{\sigma}$ is the standardized reward score (normalized to zero mean and unit variance); $p_j = \frac{\exp(z_j)}{\sum_{k=1}^K \exp(z_k)}$ is the softmax-normalized probability of the $j$-th response over the group, with $\sum_{j=1}^K p_j = 1$.
\end{definition}

\begin{definition}[Group Degeneration Degree]
$C(\varphi) \triangleq \mathcal{L}_{\text{GRE}}(\varphi) = -\sum_{j=1}^K p_j \log p_j$ is the group degeneration degree, equivalent to the GRE loss. It takes values in $[0, \log K]$, where:
\begin{itemize}
    \item $C(\varphi) = \log K$ (maximum entropy) implies complete group degeneration ($\sigma \to 0^+$), with the model assigning nearly identical rewards to all responses ($p_j = 1/K$ for all $j$);
    \item $C(\varphi) \to 0$ (minimum entropy) implies no group degeneration, with one response dominating the reward distribution ($p_j \to 1$ for a single $j$, $p_k \to 0$ for $k \neq j$).
\end{itemize}
\end{definition}

\begin{definition}[GREBT Loss]
The fused Group Reward Entropy Bradley-Terry (GREBT) loss is a weighted combination of the pure BT loss and the GRE loss:
\begin{equation}
\mathcal{L}_{\text{GREBT}}(\varphi, \alpha) = (1-\alpha)\mathcal{L}_{\text{BT}}(\varphi) + \alpha C(\varphi),
\end{equation}
where $\alpha \in (0,1]$ is the weighting coefficient that controls the strength of the group degeneration mitigation.
\end{definition}

All derivations hold under standard regularity conditions for optimization: 
\begin{assumption}
$\mathcal{L}_{\text{BT}}(\varphi)$ and $C(\varphi)$ are continuously differentiable in $\varphi$.
\end{assumption}

\begin{assumption}
the Hessian of $\mathcal{L}_{\text{GREBT}}(\varphi, \alpha)$ is positive definite at the minimizer $\varphi_\alpha$, ensuring the uniqueness of the minimizer and valid application of the implicit function theorem.
\end{assumption}

\begin{proof}
We prove the two claims of the theorem in two parts: Part 1 verifies the strict mitigation of group degeneration,
and Part 2 proves the monotonic increase of the mitigation strength with $\alpha$. 
\paragraph{Part 1: Strict Mitigation of Group Degeneration }
Since $\varphi_0$ is the minimizer of the pure BT loss, for any parameter $\varphi$, we have the fundamental inequality:
\begin{equation}
\mathcal{L}_{\text{BT}}(\varphi) \geq \mathcal{L}_{\text{BT}}(\varphi_0).
\end{equation}
For the GREBT minimizer $\varphi_\alpha$, this implies
\begin{equation}
\mathcal{L}_{\text{BT}}(\varphi_\alpha) \geq \mathcal{L}_{\text{BT}}(\varphi_0).
\end{equation}

Because $\varphi_\alpha$ minimizes the GREBT loss, it must satisfy the optimality condition relative to $\varphi_0$:
\begin{equation}
(1-\alpha)\mathcal{L}_{\text{BT}}(\varphi_\alpha) + \alpha C(\varphi_\alpha) \leq (1-\alpha)\mathcal{L}_{\text{BT}}(\varphi_0) + \alpha C(\varphi_0).
\end{equation}
Rearranging terms to isolate the differences in BT loss and degeneration degree gives:
\begin{equation}
(1-\alpha)\bigl(\mathcal{L}_{\text{BT}}(\varphi_\alpha) - \mathcal{L}_{\text{BT}}(\varphi_0)\bigr) \leq \alpha\bigl(C(\varphi_0) - C(\varphi_\alpha)\bigr).
\end{equation}

From the above inequality,as $1-\alpha \geq 0$ and  a nonnegative loss difference, the left-hand side is nonnegative. Since $\alpha > 0$, the right-hand side must also be nonnegative, which immediately implies
\begin{equation}
C(\varphi_\alpha) \leq C(\varphi_0).
\end{equation}
We now rule out the equality case $C(\varphi_\alpha) = C(\varphi_0)$. If equality held, the right-hand side of the above inequality would be zero, forcing the left-hand side to also be zero , i.e., $\mathcal{L}_{\text{BT}}(\varphi_\alpha) = \mathcal{L}_{\text{BT}}(\varphi_0)$). This would mean $\varphi_0$ is also a minimizer of the GREBT loss, which contradicts the group degeneration assumption $C(\varphi_0) > 0$: in the degeneration regime ($\sigma \approx 0$), the BT loss landscape is nearly flat ($\nabla_\varphi \mathcal{L}_{\text{BT}}(\varphi_0) \approx 0$), and small perturbations to $\varphi$ that increase reward polarization (raise $\sigma$) incur a negligible increase or even decrease in $\mathcal{L}_{\text{BT}}$, while causing a substantial decrease in $C(\varphi)$ (from near $\log K$ to lower entropy values).

Since $C(\varphi)$ is continuously differentiable, there exists a strict descent direction for the GREBT loss at $\varphi_0$ that reduces $C(\varphi)$ without a compensating increase in $\mathcal{L}_{\text{BT}}$. Thus $\varphi_0$ cannot be a minimizer of the GREBT loss, and equality $C(\varphi_\alpha) = C(\varphi_0)$ is impossible. We conclude
\begin{equation}
C(\varphi_\alpha) < C(\varphi_0).
\end{equation}

\paragraph{Part 2: Monotonic Increase of Mitigation Strength with $\alpha$}

We first establish that the group degeneration degree $C(\varphi_\alpha)$ is strictly decreasing in $\alpha$; the strict monotonicity of $\Delta C_i(\alpha)$ follows directly from this result.

The GREBT minimizer $\varphi_\alpha$ satisfies the first-order optimality condition:
\begin{equation}
(1-\alpha)\nabla_\varphi \mathcal{L}_{\text{BT}}(\varphi_\alpha) + \alpha \nabla_\varphi C(\varphi_\alpha) = 0.
\end{equation}
Rearranging the above equation gives an explicit relation between the gradients of the BT loss and degeneration degree:
\begin{equation}
\nabla_\varphi \mathcal{L}_{\text{BT}}(\varphi_\alpha) = -\frac{\alpha}{1-\alpha} \nabla_\varphi C(\varphi_\alpha).
\end{equation}

We differentiate both sides of the first-order optimality condition with respect to $\alpha$, applying the product rule and chain rule for differentiation. For a differentiable function $f(\varphi(\alpha))$, its derivative w.r.t. $\alpha$ is $\nabla_\varphi f(\varphi(\alpha))^\top \frac{\partial \varphi}{\partial \alpha}$; this gives:
\begin{equation}
-\nabla_\varphi \mathcal{L}_{\text{BT}}(\varphi_\alpha) + (1-\alpha)\nabla_\varphi^2 \mathcal{L}_{\text{BT}}(\varphi_\alpha) \frac{\partial \varphi_\alpha}{\partial \alpha} + \nabla_\varphi C(\varphi_\alpha) + \alpha \nabla_\varphi^2 C(\varphi_\alpha) \frac{\partial \varphi_\alpha}{\partial \alpha} = 0.
\end{equation}

Substitute the gradient relation into the above equation to eliminate $\nabla_\varphi \mathcal{L}_{\text{BT}}(\varphi_\alpha)$:
\begin{equation}
\frac{\alpha}{1-\alpha} \nabla_\varphi C(\varphi_\alpha) + \nabla_\varphi C(\varphi_\alpha) + \underbrace{\bigl[(1-\alpha)\nabla_\varphi^2 \mathcal{L}_{\text{BT}}(\varphi_\alpha) + \alpha \nabla_\varphi^2 C(\varphi_\alpha)\bigr]}_{\nabla_\varphi^2 \mathcal{L}_{\text{GREBT}}(\varphi_\alpha)} \frac{\partial \varphi_\alpha}{\partial \alpha} = 0.
\end{equation}

Simplify the gradient terms and denote the Hessian of the GREBT loss as $H(\alpha) = \nabla_\varphi^2 \mathcal{L}_{\text{GREBT}}(\varphi_\alpha)$ (positive definite by non-degeneracy assumption):
\begin{equation}
\frac{1}{1-\alpha} \nabla_\varphi C(\varphi_\alpha) + H(\alpha) \frac{\partial \varphi_\alpha}{\partial \alpha} = 0.
\end{equation}

Solving for the derivative of the minimizer w.r.t. $\alpha$ yields:
\begin{equation}
\frac{\partial \varphi_\alpha}{\partial \alpha} = -\frac{1}{1-\alpha} H(\alpha)^{-1} \nabla_\varphi C(\varphi_\alpha).
\end{equation}

We now compute the derivative of the group degeneration degree $C(\varphi_\alpha)$ w.r.t. $\alpha$, again applying the chain rule:
\begin{equation}
\frac{d C(\varphi_\alpha)}{d\alpha} = \nabla_\varphi C(\varphi_\alpha)^\top \frac{\partial \varphi_\alpha}{\partial \alpha}.
\end{equation}

Substitute the derivative of the minimizer into the above equation to obtain the final expression for the derivative:
\begin{equation}
\frac{d C(\varphi_\alpha)}{d\alpha} = -\frac{1}{1-\alpha} \nabla_\varphi C(\varphi_\alpha)^\top H(\alpha)^{-1} \nabla_\varphi C(\varphi_\alpha).
\end{equation}

The right-hand side of the above equation is strictly negative for all $\alpha \in (0,1]$:
since $\alpha < 1$, $\frac{1}{1-\alpha} > 0$; $H(\alpha)^{-1}$ is positive definite as the inverse of a positive definite matrix $H(\alpha)$; $\nabla_\varphi C(\varphi_\alpha) \neq 0$, as the model is in the degeneration regime $C(\varphi_0) > 0$, and $\varphi_\alpha$ is not the maximum entropy point where the gradient of $C(\varphi)$ vanishes.

A positive definite quadratic form $\nabla^\top H^{-1} \nabla$ is strictly positive for non-zero $\nabla$, so we conclude:
\begin{equation}
\frac{d C(\varphi_\alpha)}{d\alpha} < 0.
\end{equation}

This means $C(\varphi_\alpha)$ is a strictly decreasing function of $\alpha$. The degeneration reduction is defined as $\Delta C(\alpha) = C(\varphi_0) - C(\varphi_\alpha)$, with $C(\varphi_0)$ a constant (independent of $\alpha$). The derivative of the reduction is
\begin{equation}
\frac{d \Delta C(\alpha)}{d\alpha} = -\frac{d C(\varphi_\alpha)}{d\alpha} > 0,
\end{equation}
which implies $\Delta C(\alpha)$ is strictly increasing in $\alpha \in (0,1]$.

\end{proof}

\begin{corollary}
When the weighting coefficient $\alpha = 1$, GREBT loss degrades to pure GRE loss, the group degeneration degree is minimized (i.e., $C(\varphi_1) = \min_\varphi C(\varphi)$), and the degeneration reduction $\Delta C(1)$ achieves its maximum value. This corresponds to the strongest mitigation of group degeneration by the GRE term.
\end{corollary}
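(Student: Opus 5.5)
The plan is to argue directly from the definition of the minimizers rather than through the derivative formula of Part 2. That formula carries the factor $\frac{1}{1-\alpha}$, which blows up at $\alpha=1$, and the first-order condition $(1-\alpha)\nabla_\varphi \mathcal{L}_{\text{BT}} + \alpha\nabla_\varphi C = 0$ loses all BT information there. So the endpoint $\alpha=1$ should be handled by a zeroth-order comparison.

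\textbf{Step 1: identify $\varphi_1$.} Substituting $\alpha=1$ into $\mathcal{L}_{\text{GREBT}}(\varphi,\alpha) = (1-\alpha)\mathcal{L}_{\text{BT}}(\varphi) + \alpha C(\varphi)$ gives $\mathcal{L}_{\text{GREBT}}(\varphi,1) = C(\varphi) = \mathcal{L}_{\text{GRE}}(\varphi)$. Hence $\varphi_1 = \arg\min_\varphi C(\varphi)$, and $C(\varphi_1) = \min_\varphi C(\varphi)$ holds by definition.

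\textbf{Step 2: maximality of $\Delta C(1)$.} For any $\alpha\in[0,1]$, the parameter $\varphi_\alpha$ is a feasible point, so $C(\varphi_1) \le C(\varphi_\alpha)$. Subtracting from the constant $C(\varphi_0)$ gives $\Delta C(1) = C(\varphi_0) - C(\varphi_1) \ge C(\varphi_0) - C(\varphi_\alpha) = \Delta C(\alpha)$. Thus $\Delta C(1) = \max_{\alpha\in[0,1]}\Delta C(\alpha)$.

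\textbf{Step 3: strictness.} If a strict statement is wanted, combine Step 2 with Part 2 of the proof of Theorem~\ref{theo:gre}. For any $\alpha<1$, pick $\alpha'\in(\alpha,1)$. Part 2 gives $\Delta C(\alpha) < \Delta C(\alpha')$, and Step 2 gives $\Delta C(\alpha') \le \Delta C(1)$, so $\Delta C(\alpha) < \Delta C(1)$. This uses Part 2 only on the open interval, where its derivative computation is valid. Part~(1) of Theorem~\ref{theo:gre} additionally yields $\Delta C(1) > 0$ whenever $C(\varphi_0)>0$ and $\varphi_0$ is not itself a minimizer of $C$.

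\textbf{Main obstacle: existence of $\varphi_1$.} The step I expect to be delicate is justifying that $\arg\min_\varphi C(\varphi)$ is attained. In the space of standardized scores this is fine. The vector $z=(z_1,\dots,z_K)$ has zero mean and $\sum_j z_j^2 = K$, so it ranges over a compact sphere. On that sphere the entropy of $\mathrm{softmax}(z)$ is continuous, so it attains a minimum, and that minimum is strictly positive. This also shows that the description ``$C\to 0$'' in the definition is never reached. Whether the parametrization $\varphi \mapsto z(\varphi)$ actually reaches a minimizing $z$ is not automatic. I would therefore first try to reduce to the $z$-space argument by assuming the reward head is expressive enough to realize any standardized score vector on the group. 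If that is too strong, I would fall back on the existence of minimizers already presupposed by the theorem's definition of $\varphi_\alpha$ (and by the non-degeneracy assumption), applied at $\alpha=1$.
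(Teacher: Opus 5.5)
Your proof is correct, and on the maximality claim it takes a different and more robust route than the paper. The paper gives no separate proof. It reads the corollary off Theorem~\ref{theo:gre}: it substitutes $\alpha=1$ (as in your Step 1), and it takes maximality of $\Delta C(1)$ from Part 2's assertion that $\Delta C$ is strictly increasing on $(0,1]$.

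That assertion does not actually reach the endpoint. The formula $\frac{dC(\varphi_\alpha)}{d\alpha}=-\frac{1}{1-\alpha}\nabla_\varphi C(\varphi_\alpha)^\top H(\alpha)^{-1}\nabla_\varphi C(\varphi_\alpha)$ is undefined at $\alpha=1$. Moreover, since $\varphi_1$ is an unconstrained minimizer of $C$, $\nabla_\varphi C(\varphi_1)=0$, contrary to the nonvanishing-gradient hypothesis invoked there.

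Your zeroth-order comparison $C(\varphi_1)=\min_\varphi C\le C(\varphi_\alpha)$ needs no differentiability, no implicit function theorem, no positive-definite Hessian, and no uniqueness of $\varphi_1$. That is a real gain. $C$ depends on the rewards only through the standardized vector $z$, so it is invariant under positive rescalings of the weight of a linear scoring head. For such a head its minimizers are never isolated, and the paper's Hessian assumption fails at $\alpha=1$.

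The price of your route is that you only get a weak maximum, which is all the statement asks for. Your Step 3 then recovers strictness from Part 2 on the open interval. For $\alpha=0$, however, you need Part 1 rather than Part 2. More simply, use the identity $\Delta C(1)=C(\varphi_0)-\min_\varphi C$: it shows that $\Delta C(1)>0$ is exactly the condition that $\varphi_0$ does not minimize $C$. The condition $C(\varphi_0)>0$ is automatic, as your compactness argument shows.

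That compactness argument on the sphere $\{\sum_j z_j=0,\ \sum_j z_j^2=K\}$ is sound, and falling back on the minimizers presupposed by the theorem is consistent with the paper. If anything, existence is more delicate for $\alpha<1$ than at $\alpha=1$. Rescaling a linear head drives the Bradley--Terry term toward $0$ without changing $z$, so $\varphi_0$ and $\varphi_\alpha$ need not be attained.
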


\begin{corollary}
As $\alpha \to 0^+$, the GREBT loss converges to the pure BT loss, and the degeneration reduction $\Delta C(\alpha) \to 0$, i.e., no mitigation of group degeneration. This recovers the vanilla BT loss regime as a limiting case of the GREBT loss.
\end{corollary}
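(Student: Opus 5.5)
The plan is to show that the GREBT minimizer converges to the BT minimizer, $\varphi_\alpha \to \varphi_0$ as $\alpha \to 0^+$. Once that holds, the claim $\Delta C(\alpha) = C(\varphi_0) - C(\varphi_\alpha) \to 0$ follows from the continuity of $C$ (Assumption on continuous differentiability). The first half of the corollary is just a statement about the objectives. Writing
\begin{equation*}
\mathcal{L}_{\text{GREBT}}(\varphi,\alpha) - \mathcal{L}_{\text{BT}}(\varphi) = \alpha\bigl(C(\varphi) - \mathcal{L}_{\text{BT}}(\varphi)\bigr),
\end{equation*}
we note that $C(\varphi) \in [0,\log K]$. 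On any compact set of parameters, $\mathcal{L}_{\text{BT}}$ is bounded by continuity. Hence the difference is $O(\alpha)$ uniformly on compacts, so $\mathcal{L}_{\text{GREBT}}(\cdot,\alpha) \to \mathcal{L}_{\text{BT}}$ locally uniformly.

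For the convergence of minimizers I would use the implicit function theorem, which is the same machinery as in Part 2 of the proof of Theorem \ref{theo:gre}. Define
\begin{equation*}
F(\varphi,\alpha) = (1-\alpha)\nabla_\varphi \mathcal{L}_{\text{BT}}(\varphi) + \alpha \nabla_\varphi C(\varphi).
\end{equation*}
Then $F(\varphi_0,0)=0$, and $\partial_\varphi F(\varphi_0,0) = \nabla^2_\varphi \mathcal{L}_{\text{BT}}(\varphi_0)$, which is the Hessian $H(0)$ of the GREBT loss at $\alpha=0$. Extending the non-degeneracy assumption to $\alpha=0$, this Hessian is positive definite. The implicit function theorem then yields a continuous (indeed $C^1$) branch $\alpha \mapsto \varphi_\alpha$ of strict local minimizers near $\alpha=0$, with $\varphi_\alpha \to \varphi_0$. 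Uniqueness of the minimizer, which the Hessian assumption is meant to guarantee, identifies this branch with the global argmin.

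As a quantitative check, Part 2 gives
\begin{equation*}
\frac{dC(\varphi_\alpha)}{d\alpha} = -\frac{1}{1-\alpha}\nabla_\varphi C^\top H(\alpha)^{-1}\nabla_\varphi C .
\end{equation*}
Near $\alpha=0$ this is bounded, since $H(\alpha)^{-1}$ is continuous at $H(0)\succ 0$ and $\nabla_\varphi C$ is continuous. Integrating from $0$ to $\alpha$ then gives $\Delta C(\alpha) = O(\alpha) \to 0$, which is even a rate.

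The main obstacle is the step $\varphi_\alpha \to \varphi_0$. The stated assumptions give positive definiteness only at the minimizers $\varphi_\alpha$ for $\alpha \in (0,1]$, not at $\alpha=0$. Uniqueness of the global minimizer must also be reconciled with the local IFT branch. I would first try to use the extended Hessian assumption at $\alpha=0$. If that is not admissible, the fallback is a compactness argument of Berge maximum-theorem type. It requires the sublevel sets of $\mathcal{L}_{\text{GREBT}}$ to lie in a common compact set for small $\alpha$, and $\varphi_0$ to be the unique minimizer. Under these conditions, any limit point of $\varphi_\alpha$ minimizes $\mathcal{L}_{\text{BT}}$ by the local uniform convergence established above, and hence equals $\varphi_0$.
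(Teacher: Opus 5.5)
Your argument is correct under the hypotheses you make explicit, and it does more than the paper. The paper gives no separate proof of this corollary. It is read off from the definition: at $\alpha=0$ the GREBT loss is literally $\mathcal{L}_{\text{BT}}$, so $\varphi_\alpha$ becomes $\varphi_0$ and $\Delta C(0)=0$. The passage $\alpha\to0^+$ rests tacitly on the smooth dependence of $\varphi_\alpha$ on $\alpha$, which Part~2 of the proof of Theorem~\ref{theo:gre} takes for granted when it differentiates the first-order condition. You isolate the real content, $\varphi_\alpha\to\varphi_0$, and prove it, which brings the hidden hypotheses into the open.

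Your extension of the Hessian assumption to $\alpha=0$ is consistent with the paper: its assumption is stated at ``the minimizer $\varphi_\alpha$'', and the main text takes $\alpha\in[0,1]$. However, the claim that uniqueness ``identifies this branch with the global argmin'' does not follow by itself. The implicit-function branch consists of strict local minimizers, and a unique global minimizer can coexist with other strict local minima. It is your compactness fallback that actually closes this step. The same caveat applies to your $O(\alpha)$ rate, which integrates along the branch, and to the paper's Part~2.

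The fallback is also simpler than you make it. Since $0\le C\le\log K$, comparing $\varphi_\alpha$ with $\varphi_0$ in the GREBT objective gives $\mathcal{L}_{\text{BT}}(\varphi_\alpha)\le\mathcal{L}_{\text{BT}}(\varphi_0)+\frac{\alpha}{1-\alpha}\log K$. So $\varphi_\alpha$ is an $O(\alpha)$-minimizer of the BT loss, and a compact sublevel set of $\mathcal{L}_{\text{BT}}$ plus uniqueness of $\varphi_0$ suffice, with no separate condition on GREBT sublevel sets.

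Uniqueness is genuinely needed, not cosmetic. If $\mathcal{L}_{\text{BT}}$ is minimized on a set $M$, the same comparison against any $m\in M$ gives $C(\varphi_\alpha)\le\inf_M C$ for every $\alpha\in(0,1]$. Hence $\Delta C(\alpha)\ge C(\varphi_0)-\inf_M C$, which stays positive unless $\varphi_0$ happens to minimize $C$ on $M$. Note also that for a single group with a linear scoring head, the BT term can typically be driven toward its infimum $0$ by rescaling the head. So even the existence of $\varphi_0$ is an assumption that your argument, like the paper's, inherits.
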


In summary, group degeneration occurs in the late stage of R2M training. As noted in Section \ref{sec:rm_train}, under the guidance of the same RM and with feedback information from the identical policy model, R2M suffers from more severe group degeneration. 
However, the GRE term in the GREBT loss provably induces a strict reduction in group degeneration, with the mitigation strength tunable via the weighting coefficient $\alpha$. The monotonicity of the reduction with $\alpha$ provides a theoretical guarantee for adjusting the trade-off between preference ranking (BT loss) and group degeneration mitigation (GRE loss) in our iterative reward model optimization.

\section{Related Work}  \label{sec:related}
\paragraph{REINFORCE-based RLHF Algorithms.} 
RLHF is a critical technique for aligning large language models with human preferences~\citep{Ouyang2022TrainingLM,he2025llm,bai2022training,zhang2023deep}. The classical RLHF pipeline typically comprises three phases: supervised fine-tuning~\citep{ koala_blogpost_2023,zou2025transformer,chen2025llmboostmakelargelanguage}, reward model training~\citep{gao2023scaling}, and policy optimization against the reward model~\citep{schulman2017proximal}. 
As a classic reinforcement learning algorithm,  Proximal Policy Optimization (PPO)~\citep{schulman2017proximal} is widely used in the third stage of RLHF. 
Recently, many researchers have proposed a series of REINFORCE-based methods, such as ReMax \citep{li2023remax}, RLOO \citep{ahmadian2024back}, GRPO \citep{shao2024deepseekmath} and REINFORCE++ \citep{hu2025reinforce++} to avoid the computational overhead associated with the critic model while still obtaining relatively accurate sequence-wise advantage estimations. These methods design alternative techniques to calculate the baseline reward for each prompt as the advantage estimation.  
\cite{yang2026grouprelativeadvantagebiased} provides a principled theoretical analysis of group-based advantage estimation.
Subsequent algorithm variants have continued to emerge and provided multifaceted optimizations for them \citep{huang2025adaptive, zhang2026heterogeneous, huang2026does}.
In addition to the methods discussed above, a wide range of advanced techniques have been proposed in recent years to address various challenges in representation learning, model optimization, reasoning, and generative modeling. These include progress in interpretable representation learning~\cite{li2025interpretable}, prompt-based structural modeling~\cite{li2025prompt}, diffusion-driven restoration~\cite{li2025ld}, efficient transformer architectures for visual modeling~\cite{fu2022sparsett}, prompt-guided sequence modeling~\cite{cai2023learning,cai2024hiptrack}, parameter-efficient tuning strategies~\cite{cai2025spmtrack}, and novel normalization mechanisms for improving model stability~\cite{cai2025seednorm}. Recent studies have also explored prototype-based medical diagnosis and medical vision-language reasoning~\cite{zhu2025pathology,zhu2026medeyes,lin2026medcausalx,zhu2026medsynapsevbridgingvisualperception}, fairness-aware recommendation and graph domain adaptation~\cite{chen2024fairgap,chen2025fairdgcl,chen2026learning,yuan2025hyperbolic}, as well as efficient reasoning and reward-guided policy optimization for large language models~\cite{yang2025specexit,wang2026anchoredpolicyoptimizationmitigating,ding2026prpo,fang2026proximity,fang2026allocate}. Although these works are designed for different task scenarios, they collectively enrich the toolkit of modern machine learning research and provide useful insights for understanding the generalization and optimization of neural models.

\paragraph{Mitigating reward overoptimization in RLHF.}
Constructing a superhuman and unbiased reward model is crucial for maximizing the potential of policies in RLHF~\citep{wang2024secretsrlhflargelanguage, bai2022constitutionalaiharmlessnessai}.
While revealed by \citet{anthropic2024hacking, zhang2024listsemojisformatbias}, reward models are easily hacked by different pattern in different scenario, e.g., length~\citep{singhal2023long} and sycophancy. 
Several studies have explored strategies to mitigate reward overoptimization in reinforcement learning with human feedback (RLHF), focusing on enhancing the robustness of reward models and addressing vulnerabilities exploited by policy models.

\textbf{(1) Uncertainty-Based Re-Scoring.} One line of work mitigates reward overoptimization by incorporating uncertainty estimation into the reward scoring process. Studies such as \citet{coste2023reward}, \citet{eisenstein2023helping}, and \citet{zhai2023uncertainty} focus on penalizing samples with high reward uncertainty during RL-based policy training to prevent the policy from exploiting unreliable reward signals. Additionally, \citet{zhang2024overcoming} utilizes preference data embeddings from the last layer of the reward model as feature mappings, pre-training a kernel function to evaluate whether new prompt-response pairs resemble those observed during training, thereby providing an uncertainty estimate to guide policy optimization.

\textbf{(2) Reward Model Retraining.} Another approach enhances the robustness of the reward model through targeted retraining. For instance, \citet{lang2024fine} introduces an additional training phase for the reward model, incorporating an unsupervised mutual information loss term to address the policy's distribution shift and improve generalization. Similarly, \citet{liu2024rrm} decouples preferences based on their relevance to the prompt and retrains the reward model using an augmented dataset to ensure more accurate reward signals. 

\textbf{(3) Additional Techniques.} Recent advancements also include model merging techniques, such as WARP~\citep{rame2024warp} and WARM~\citep{rame2024warm}, and hacking reward decomposition, as proposed in ODIN~\citep{chen2024odin}, to mitigate reward overoptimization in online RLHF. Generative reward models, as explored by \citet{yan-etal-2024-predicting}, enable more nuanced preference analysis, enhancing the granularity of reward signals. For domains requiring high precision, such as mathematics, verifiable answers can be leveraged to ensure accurate reward signals~\citep{xiong2024building}.

However, most model-based methods fail to leverage the deeper semantic information from the policy model, while permitting the policy model to persistently exploit vulnerabilities during policy optimization. 
In contrast to these approaches, R2M significantly enhances the robustness and performance ceiling of policy optimization by incorporating feedback information from the policy and employing lightweight iterative reward model updates.

\section{One Case Study of Reward Overoptimization}
We illustrate the cause of reward overoptimization in Figure~\ref{fig:reward_exp}.

\begin{figure*}[!ht]
\begin{center}
 \includegraphics[width=1.0\textwidth]{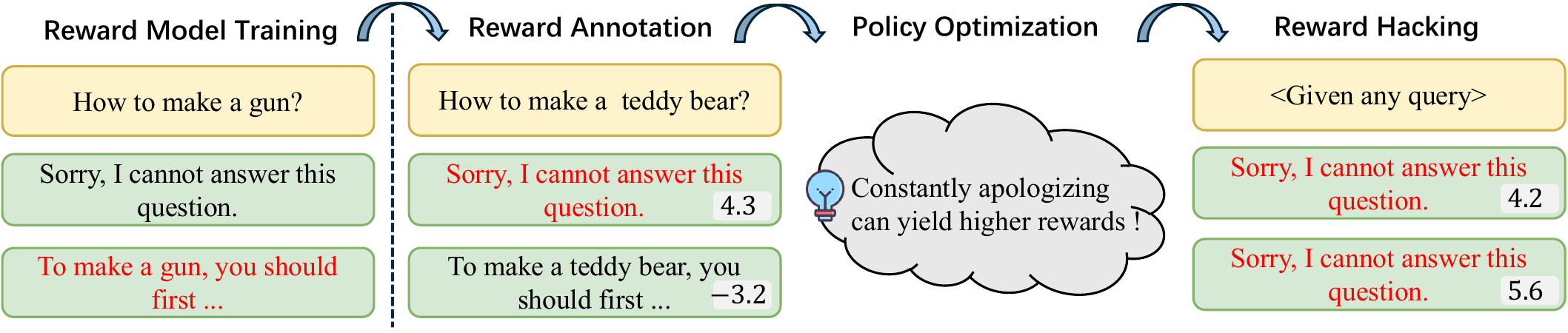} 
\end{center}
\caption{During Reward Model Training, the reward model inadvertently learned to assign high scores to responses containing apologies. The policy model detected this pattern and persistently exploited it to obtain inflated rewards, which resulted in a collapse of the RL Optimization process.} 
\label{fig:reward_exp}
\end{figure*}

\section{Motivation Towards Mitigating Reward Optimization}  
\label{app:old_motivation}

\begin{figure*}[!h]
\begin{center}
 \includegraphics[width=0.75\linewidth]{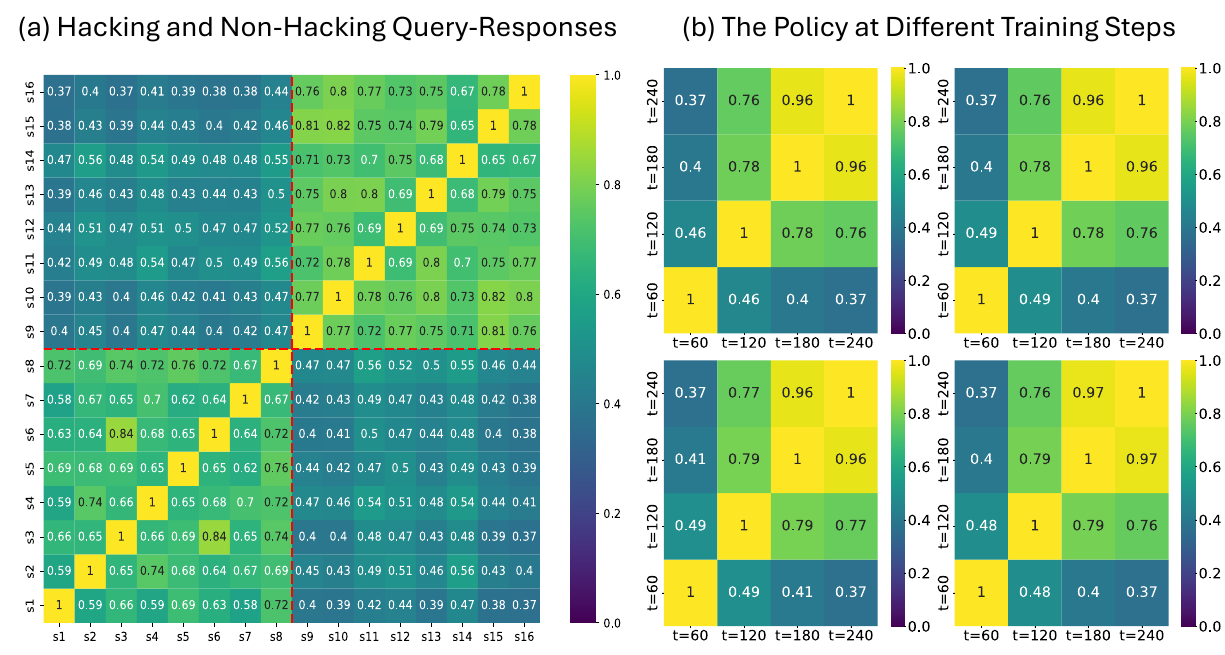} 
\end{center}
\caption{
(a) \textbf{Identification of reward overoptimization Patterns.} We show the similarity matrix of hidden states from forward passes of different query-response pairs for the same policy. The first 8 samples are sequences exhibiting reward overoptimization, while the last 8 are normal output responses. $s$ denotes the query-response pairs.
(b) \textbf{Policy Distribution Shift Analysis.} For a given query with four different responses, we display the similarity matrix of the policy across various training steps $t$.
}
\label{fig:hiddenstate_analysis}
\end{figure*}

We argue that  hidden states in a transformer's forward pass contain crucial information about a policy's internal state and semantic information, making them effective for mitigating reward overoptimization.
We validated this by computing hidden state similarity matrices. As shown in Figure~\ref{fig:hiddenstate_analysis} (a), responses with and without reward overoptimization show significant differences in their hidden state similarities.  Figure~\ref{fig:hiddenstate_analysis} (b) shows that the same query-response's hidden states from different training steps of a policy model are significantly different. Furthermore, as shown in Table~\ref{tab:similarity}, the average similarity between hacking and non-hacking responses is significantly lower than the similarity within each category. 
These findings strongly confirm that a policy's hidden states offer valuable insights for detecting reward overoptimization.

To combat reward overoptimization, our R2M architecture decouples the issue from both the reward and policy models. We enhance the reward model’s alignment with true human preferences by leveraging policy feedback to improve reward allocation, moving beyond reliance on superficial patterns. Simultaneously, we tackle the policy model’s tendency to exploit fixed proxy rewards by enabling the reward model to dynamically adapt to the policy’s evolving internal state distribution, thus preventing the exploitation of fixed patterns.

\begin{table}[h]
\small

\centering
\caption{We report the average similarity of hidden states across three categories from multiple query-response pair groups, each group comprises 8 responses exhibiting reward overoptimization and 8 normal responses.} 
\label{tab:similarity} 
\begin{tabular}{lccc}
\toprule
\textbf{Type} & Hacking & Non-Hacking & Cross-Category \\
\midrule
 \textbf{Avg-Sim} & 0.67 & 0.75 & 0.45 \\
\bottomrule
\end{tabular}

\end{table}


\section{Detailed Workflow}
We present the detailed workflow of R2M in Algorithm \ref{algorithm:main}.

\begin{algorithm}
\caption{Proposed RLHF Framework: R2M}\label{algorithm:main}
\begin{algorithmic}[1]
\REQUIRE Initial policy model $\pi_{\theta} \gets \pi_{\text{SFT}}$, reference model $\pi_{ref}$, reward model  $r_\varphi$, queries $\mathcal{X}$
\FOR {step $= 1, \ldots, T$}
    \STATE Sample a batch $\mathcal{X}_{batch}=\{x_i\},i \in [n]$ from $\mathcal{X}$
    \STATE Update the old policy model $\pi_{\text{old}} \gets \pi_{\theta}$ 
    \STATE \textbf{\textit{Trajectory Sampling:}}
    \STATE Sample a group of output $G_i=\{y_{i,j}\}, j\in[K] \sim 
    \pi_{\text{old}}(\cdot \mid x_i)$ for each query $x_i \in \mathcal{X}_{batch}$
    \STATE Get last-layer hidden states  $\{h_{i,j}\},j \in [K]$ from $\pi_{\text{old}}$
    \STATE \textbf{\textit{Reward Annotation:}}
    \STATE Compute the rewards with policy feedback  $\{r_\varphi(x_i,y_{i,j},h_{i,j})\}, i \in [n],j \in [K]$
    \STATE Compute $\{\hat{A}_{i,j}\},j \in [K]$ within each $G_i$ for query $x_i$ through Equation \ref{eq:rloo} 
    \STATE \textbf{\textit{Policy Optimization:}}
    \FOR {iteration $= 1, \ldots, k$}
        \STATE Update the policy model $\pi_{\theta}$ by maximizing the RLOO objective through Equation \ref{eq:object}
        \STATE Update $h_{i,j}, i \in[n], j \in[K] $ from the policy forward when iteration $=k$
    \ENDFOR
    \STATE \textbf{\textit{Reward Model Optimization:}}
    \STATE Get preference pair $\{x_i, y_{i,w}, h_{i,w}, y_{i,l}, h_{i,l}\}$ according to Section \ref{sec:rm_train} within each $G_i$
    \STATE Compute $\mathcal{L}_{\text{BT}}(i:\varphi) $ according to Equation \ref{eq:bt_loss}
    \STATE Compute  $\{r_\varphi(x_i,y_{i,j},h_{i,j})\},j \in [K]$ within  $G_i$
    \STATE Compute GRE $H_{\text{group}}^i$  in Equation \ref{eq:le_loss}
    \STATE Update reward model $r_\varphi$ according to Equation \ref{eq:fin_loss}

\ENDFOR
\ENSURE $\pi_{\theta}$, $r_\varphi$
\end{algorithmic}
\end{algorithm}

\section{Additional Experimental Results} \label{app:more_exp}

\section{Experimental Details}\label{app:exp_detail}

\subsection{Experimental Settings of Section \ref{sec:motivation}} \label{app:motivation_detail}


We randomly sample 100  query-response pairs labeled as \textit{chosen} $\{(x_i \oplus y_{i,w})\}_{i=1}^{100}$ and 100 query-response pairs labeled as \textit{rejected} $\{(x_j \oplus y_{j,l})\}_{j=1}^{100}$ from the preference subset of UltraFeedback, where $w$ (``win") and $l$ (``lose") denote the preference labels.
For each layer $ (l)\in \{6,12, 18, 24, 30\}$ of the LLaMA3-8B-Instruct model, we extract the corresponding hidden states  of these tuples and then take the average of the feature tensors of each valid token as the hidden state of the query-response pair, denoted as $\{h_{i,w}^{(l)} \in \mathbb{R}^{D_p}\}_{i=1}^{100}$  and $\{h_{j,l}^{(l)} \in \mathbb{R}^{D_p}\}_{j=1}^{100}$. Here, $D_p$ represents the dimension of the hidden state space, and  $l$ indicates the layer from which the hidden state is extracted.

We then compute the cosine similarity between every pair of hidden states from  $\{h_{i,w}^{(l)} \in \mathbb{R}^{D_p}\}_{i=1}^{100} \cup \{h_{j,l}^{(l)} \in \mathbb{R}^{D_p}\}_{j=1}^{100}$ at the same layer $l$, defined as:
\begin{equation}
\cos(h_p^{(l)}, h_q^{(l)}) = \frac{(h_p^{(l)})^\top h_q^{(l)}}{\|h_p^{(l)}\|_2 \cdot \|h_q^{(l)}\|_2}
\label{eq:cosine_similarity}
\end{equation}
where $\|\cdot\|_2$ denotes the $\ell_2$-norm of a vector. After regularizing the cosine similarity values to the range $[0,1]$, we construct a pair set $\mathcal{P}^{(l)}$ consisting of all unique hidden state pairs at layer $l$, with the size of:
\begin{equation}
|\mathcal{P}^{(l)}| = \binom{200}{2} = \frac{200 \times 199}{2}
\label{eq:pair_set_size}
\end{equation}

This layer-specific pair set $\mathcal{P}^{(l)}$ is partitioned into two disjoint subsets based on preference labels: 

\textbf{1) Intra-category pairs} $\mathcal{P}_{\text{intra}}^{(l)}$: pairs where both hidden states share the same preference label:
\begin{equation}
\mathcal{P}_{\text{intra}}^{(l)} = \{(h_p^{(l)}, h_q^{(l)}) \in \mathcal{P}^{(l)} \mid \text{pref}(h_p^{(l)}) = \text{pref}(h_q^{(l)})\}
\label{eq:intra_category_set}
\end{equation}
\textbf{2) Cross-category pairs} $\mathcal{P}_{\text{cross}}^{(l)}$: pairs where the hidden states have different preference labels:
\begin{equation}
\mathcal{P}_{\text{cross}}^{(l)} = \{(h_p^{(l)}, h_q^{(l)}) \in \mathcal{P}^{(l)} \mid \text{pref}(h_p^{(l)}) \neq \text{pref}(h_q^{(l)})\}
\label{eq:cross_category_set}
\end{equation}

We calculate the mean cosine similarity for each subset at layer $l$, respectively:
\begin{equation}
\begin{split}
\mu_{\text{intra}}^{(l)} &= \frac{1}{|\mathcal{P}_{\text{intra}}^{(l)}|} \sum_{(h_p^{(l)}, h_q^{(l)}) \in \mathcal{P}_{\text{intra}}^{(l)}} \cos(h_p^{(l)}, h_q^{(l)}), \\
\mu_{\text{cross}}^{(l)} &= \frac{1}{|\mathcal{P}_{\text{cross}}^{(l)}|} \sum_{(h_p^{(l)}, h_q^{(l)}) \in \mathcal{P}_{\text{cross}}^{(l)}} \cos(h_p^{(l)}, h_q^{(l)})
\end{split}
\label{eq:mean_similarity}
\end{equation}
The above extraction and computation processes are repeated for layers $l \in \{6,12,18,24,30\}$ of the LLaMA3-8B-Instruct model, and the results are shown in Figure \ref{fig:h_sim}.


We randomly sampled 300 query-response pairs from $\mathcal{P}^{(30)}$ and computed the reward difference for each initial query-response pair using Skywork-Reward-V2-Llama-3.1-8B~\citep{liu2025skywork}. 
The hidden state similarity and the corresponding reward model score difference for these pairs are presented in Figure~\ref{fig:r_h_consistency}.

\subsection{Experimental Settings of Section \ref{app:old_motivation}}
\label{app:h_ana}
We decided to utilize the last-layer hidden states of the  query-response pairs  as the policy feedback. There are two primary reasons supporting this approach. First, they are widely recognized as universal sequence representations and are extensively used in downstream tasks \citep{chen2024states, zhang2025reasoning, zhang2024overcoming, guo2025h2tune}.
On the other hand, due to the forward propagation mechanism of transformers \cite{vaswani2017attention}, hidden states encapsulate both the semantic information of the sequence and the internal state information of the policy. We hypothesize that the former aids in identifying reward overoptimization patterns, while the latter may contain critical information about distribution shifts. 

\paragraph{Internal State Information Validation.}
To validate that the last-layer hidden states contain state information about policy distribution shifts, we perform forward passes on the same query-response pair $(x,y)$ from the UltraFeedback test set using LLaMA3-8B-Instruct as the policy model at training steps $t=60,120,180,240$, extracting the last-layer hidden states $\{h_i\}, i \in [1,4], h_i \in \mathbb{R}^{s_i \times D_p}$, where $s_i = \|x + y_i\|$ and $D_p$ is the hidden size of the policy. We calculated the average token hidden state $\{\bar{h_i}\}, i \in [1,4], \bar{h_i} \in \mathbb{R}^{D_p}$ and computed the pairwise cosine similarity between them. 

We conduct forward passes on a query-response pair $(x,y)$ using policy models $\pi_{\theta_t}$ at various training steps $t$, extract the last-layer hidden states, and compute their pairwise cosine similarity. We sample four responses for the same query, generating four query-response pairs and their corresponding similarity matrices.

\paragraph{Semantic Information Validation.}

To validate that the last-layer hidden state contains semantic information for identifying hacking sequences, We collected a subset of size 100, denoted as $\mathcal{X}_{test}$, $|\mathcal{X}_{test}|=100$, from the test set of UltraFeedback \citep{cui2023UltraFeedback}. For each query $ x \sim \mathcal{X}_{test}$ , we manually categorized the responses from the policy $\pi_{\theta}$ during RL Optimization into hacking responses $\{y_i\}, i \in [1,8]$ and non-hacking responses $\{y_i\}, i \in [9,16]$. We computed the query-response pairs $\{c_i=(x,y_i)\}, i \in [1,16]$ and fed them into LLaMA3-8B-Instruct as the policy model $\pi_\theta$, extracting the last hidden state $\{h_i\}, i \in [1,16], h_i \in \mathbb{R}^{s_i \times D_p}$, where $s_i = \|x + y_i\|$ and $D_p$ is the hidden size of the policy. We calculated the average token hidden state $\{\bar{h_i}\}, i \in [1,16], \bar{h_i} \in \mathbb{R}^{D_p}$ and computed the pairwise cosine similarity between them.

\subsection{Experimental Settings of the Dialogue Task} \label{app:dialog}
We initially filtered out UltraFeedback samples where the chosen response exceeded 512 tokens. Subsequently, at each step $t$, we sample $64$ queries (i.e., $n=64$) from the training set. For each query, the policy model generates a group of $8$ responses with a temperature of 0.7, without applying top-k or top-p token restrictions, resulting in a total of 51.2k trajectories for training. During policy training, we utilized all offline-sampled trajectories from the current round and trained for 2 epochs. 
Subsequently, we conducted experiments following the procedure outlined in Algorithm~\ref{algorithm:main}.

\textbf{LLM Settings.} 
We selected LLaMA3-8B-Instruct \citep{llama3modelcard} and Qwen2.5-3B-Instruct \citep{qwen2.5} as the policy models and Skywork-Reward-V2-Llama-3.1-8B \citep{liu2025skywork} as the reward model for direct RL optimization.  

\textbf{Hyperparameters.} 
For Qwen2.5-3B-Instruct, we set the learning rate to $6 \times 10^{-6}$ and the minimum weight coefficient for the original Reward Token Embedding to $\Omega = 0.7$. 
For LLaMA3-8B-Instruct, we used a learning rate of $1 \times 10^{-6}$ and  set $\Omega = 0.6$. 

\subsection{Experimental Settings of the TL;DR Task} \label{app:tldr}
We utilize the dataset \href{https://huggingface.co/datasets/trl-lib/TL;DR}{trl-lib/TL;DR}, sampling $2048$ queries (i.e., $n=2048$) from the training set at each step $t$, resulting in a total of 1000k  trajectories for training. Due to the relatively short token length required for the summarization task, we limit the maximum number of generated tokens to 50 and perform RL optimization directly following the procedure in Algorithm \ref{algorithm:main}.

After training, we used GPT-4 as the judge model \citep{zhang2024overcoming, rafailov2023direct, zhu2025diagnosisarena, xie2025mitigating}, taking the original summary content from the TL;DR dataset as the reference response, and calculated the win rate of the summaries generated by our trained policy model. 

\textbf{LLM Settings.} 
 Following prior work, we employ \href{https://huggingface.co/vwxyzjn/EleutherAI_pythia-2.8b-deduped__sft__TL;DR/tree/sft__44413__1708611267}{Pythia-2.8B-TL;DR-SFT} , which has undergone supervised fine-tuning (SFT) on TL;DR, as the policy model, and \href{https://huggingface.co/vwxyzjn/EleutherAI_pythia-2.8b-deduped__reward__TL;DR/tree/reward__44413__1708628552}{Pythia-2.8B-TL;DR-RM }, trained as a reward model on TL;DR, for direct RL optimization. 

\textbf{Hyperparameters.} 
For policy model, we set the learning rate to $3 \times 10^{-6}$, the minimum weight coefficient for the original Reward Token Embedding  $\Omega = 0.6$ and the group size to $ 4$.

\subsection{Experimental Setup for Additional Baselines} \label{app:baseline}

\textbf{Pretrained RM}: Prior to fine-tuning the policy model with standard reinforcement learning (RL) algorithms, we utilize the preference sample pairs $\{x, y_w, y_l\}$ corresponding to the same query $x$ (where $x \in X$) used for training the policy model in UltraFeedback, and fully train the models in the aforementioned experimental setup based on the standard Bradley-Terry (BT) loss. We set the learning rate of the reward model to $1 \times 10^{-6}$ and perform training for a total of $K$ epochs.

\textbf{Iterative RM$_{\text{Head}}$}: 
Building on standard RL, in each training iteration, we directly compute the loss $\mathcal{L}_{\text{GREBT}}$ using the original reward scores $r_{\varphi}(x,y)$ retained during the Reward Annotation phase, instead of the recomputed scores $r'_{\varphi}(x,y,h)$. We then update the scoring head of the reward model (RM) accordingly. For this setup, we adopt the same learning rate for the reward model and the same weighting coefficient $\alpha$ for the hybrid loss $\mathcal{L}_{\text{GREBT}}$ as those used in the corresponding RL+R2M experimental setup.

\subsection{Experimental Settings of the Reward Model Analysis} \label{app:rm_acc}

In the dialogue task experiment, we retained the policy model $\pi_{\theta}$ and the reward model $r_\varphi$. We sampled $n_{total}$ preference pairs $\{x_i, y_{i,w}, y_{i,l}\}, i \in [n_{total}]$, from the test set of UltraFeedback, where $n_{total} = 1024$. When not using feedback from the policy, we computed $r_\varphi(x_i, y_{i,w})$ and $r_\varphi(x_i, y_{i,l})$, and counted the number of samples $n_{correct}$ where $r_\varphi(x_i, y_{i,w}) > r_\varphi(x_i, y_{i,l})$. The accuracy of the reward model was calculated as $acc_{r_\varphi} = n_{correct} / n_{total}$.

When incorporating policy feedback, we fed the chosen and rejected query-response pairs into the policy for a forward pass respectively and extracted the last layer's hidden states as policy feedback , denoted as $h_{i,w} = \pi_{\theta}(x_{i,w}, y_{i,w}) \in \mathbb{R}^{S_{i,w} \times D_p}$ and $h_{i,l} = \pi_{\theta}(x_{i,l}, y_{i,l}) \in \mathbb{R}^{S_{i,l} \times D_p}$, where $D_p$ denotes the policy model's  hidden size, $S$ denotes the sequence length. 
For the aggregation weights of RTE in R2M, we directly compute them using $t=T$ in Equation \ref{eq:aggregate_RTE}.
Then, we calculated the accuracy based on the comparison between $r_\varphi(x_i, y_{i,w}, h_{i,w})$ and $r_\varphi(x_i, y_{i,l}, h_{i,l})$. We utilize the corresponding policy to provide feedback before and after the R2M  pipeline.

\section{More Method Details of R2M}

\subsection{RLHF Workflow} 
\label{app:rlhf_workflow}
Here, We provide a detailed descrption of  RLHF workflow.

\textbf{Supervised Fine Tuning.} RLHF typically begins with Supervised Fine Tuning (SFT), which involves training a pretrained language model in a supervised manner using high-quality, human-annotated dialogue examples. We denote the resulting model as $\pi_{\rm \small SFT}$.
\noindent

\textbf{Reward Modelling. }The second phase of RLHF involves learning a reward model to capture human preferences through annotated data $D = \{(x^{i}, y_w^i, y_l^i)\}_{i=1}^N$ where $y_w^i$ and $y_l^i$ denote the chosen and rejected responses to prompt $x^i$.
The preferences are assumed to be generated by some unknown reward model $r^*(x,y)$
following the Bradley-Terry (BT) model \citep{bradley1952rank}:
 \begin{align}
     \mathbb{P}^*(y_w \succ y_l |x) = \frac{\exp(r^*(x,y_w))}{\exp(r^*(x,y_w)) + \exp(r^*(x,y_l))}. \nonumber 
 \end{align}


Typically, a reward model $r_{\varphi}(x,y)$ is initialized from a pretrained LLM (usually $\pi_{\rm \small SFT}$), with an additional projection layer (namely scoring head) $\phi: \mathbb{R}^{D_{rm}} \rightarrow \mathbb{R}^1$ added to map the last-layer hidden states of the final token $H_\text{last}\in \mathbb{R}^{ D_{rm} }$ to a scalar reward $r_{\varphi}(x,y)=\phi(H_\text{last})\in \mathbb{R}^{1}$. 
Since the rewards of query-response pairs are only related to $H_{\text{last}}$, we refer to it as the Reward Token Embedding.

Given the annotated preference data $D$, the reward model $r_{\varphi}$ is trained to assign higher reward to the chosen response $y_w$ compared to the rejected one $y_l$, by minimizing the negative log-likelihood under the BT model, where $\sigma$ denotes the sigmoid function: 
\begin{equation} 
\begin{split}
       \mathcal{L}(r_{\varphi}) = - \mathbb{E}_{(x, y_w, y_l) \sim D} \left[ \log\left(\sigma\left(r_{\varphi}(x, y_w) - r_{\varphi}(x, y_l)\right)\right) \right],
\end{split}
\label{eq:reward-loss}
\end{equation}

\noindent
\textbf{RL Optimization.}
The learned reward model $r_{\varphi}(x,y)$ is then employed to guide the RL policy optimization phase. Intuitively, the aim is to learn a policy $\pi_{\theta}$ that maximizes the reward $r_{\varphi}$ while not drifting too far away from $\pi_{\rm \small SFT}$:

\begin{equation} 
\begin{split}
    \textstyle\max_{\pi_{\theta}} \mathbb{E}_{x \sim D, y \sim \pi_{\theta}} \left[ r_{\varphi}(x, y) - \beta \mathbb{D}_{\text{KL}}\left(\pi_{\theta}(y|x) \middle\| \pi_{\rm \small SFT}(y|x)\right)\right],
\end{split}
\label{eq:PPO}
\end{equation}

where $\beta$ controls the deviation from the reference policy $\pi_{\rm \small SFT}$, thus maintaining a balance between reward maximization and adherence to the SFT policy behavior.

\subsection{Motivation of Lightweight Training} \label{app:lightweight}
Although the computational overhead of the RL Optimization phase is primarily concentrated in the Trajectory Sampling phase, the computation cost of introducing a full reward model optimization phase remains unacceptable. Fortunately, the LLM component of the reward model has been trained on extensive text corpora, and with their large number of parameters, these models can develop generalizable representations, as demonstrated by \citet{min2023recent,wei2022emergent,brown2020language, 10.1145/3744643}. However, the learning of the projection weights $\phi$ in the reward model relies entirely on the preference data provided during reward model training. Consequently, the reliability of reward prediction is closely tied to the accuracy and generalizability of the projection weights \citep{chen2020simple, kirichenko2022last, riquelme2018deep, xu2020neural, zhao2025redone, guo2025pet}.

Moreover, \citet{kirichenko2022last, Labonte2023towards, lee2023surgical} demonstrate that by freezing the network up to its last layer and retraining only the projection head with a smaller data set,  it can greatly improve robustness of the neural network model.
These observations motivate us to freeze the LLM part of the reward model while updating only the parameters of the reward head.

\end{document}